\documentclass{article}

\usepackage{microtype}
\usepackage{graphicx}
\usepackage{subcaption}
\usepackage{booktabs} 
\usepackage{pifont}
\usepackage{hyperref}

\newcommand{\cmark}{\textcolor{teal}{\ding{51}}} 
\newcommand{\xmark}{\textcolor{purple}{\ding{55}}} 

\usepackage[accepted]{icml2026}

\usepackage{amsmath}
\usepackage{amssymb}
\usepackage{mathtools}
\usepackage{amsthm}
\usepackage{multirow}
\usepackage{listings}
\usepackage{xcolor}

\usepackage[capitalize,noabbrev]{cleveref}

\theoremstyle{plain}
\newtheorem{theorem}{Theorem}[section]
\newtheorem{proposition}[theorem]{Proposition}

\theoremstyle{definition}
\newtheorem{definition}[theorem]{Definition}

\theoremstyle{remark}

\usepackage[textsize=tiny]{todonotes}

\icmltitlerunning{TextResNet: Decoupling and Routing Optimization Signals in Compound AI Systems via Deep Residual Tuning}

\begin{document}

\twocolumn[
  \icmltitle{TextResNet: Decoupling and Routing Optimization Signals in Compound AI Systems via Deep Residual Tuning}



  \icmlsetsymbol{equal}{*}

  \begin{icmlauthorlist}
    \icmlauthor{Suizhi Huang}{ntu,jinan}
    \icmlauthor{Mei Li}{ntu}
    \icmlauthor{Han Yu}{ntu}
    \icmlauthor{Xiaoxiao Li}{ubc,vector}
  \end{icmlauthorlist}

  \icmlaffiliation{ntu}{Nanyang Technological University, Singapore}
  \icmlaffiliation{jinan}{Jinan-NTU Green Technology Research Institute, China}  
  \icmlaffiliation{ubc}{The University of British Columbia, Canada}
  \icmlaffiliation{vector}{Vector Institute, Canada}

  \icmlcorrespondingauthor{Xiaoxiao Li}{xiaoxiao.li@ece.ubc.ca}
  \icmlcorrespondingauthor{Han Yu}{han.yu@ntu.edu.sg}

  \icmlkeywords{Machine Learning, ICML}

  \vskip 0.3in
]



\printAffiliationsAndNotice{}  

\begin{abstract}

Textual Gradient-style optimizers (TextGrad) \citep{textgrad} enable gradient-like feedback propagation through compound AI systems. However, they do not work well for deep chains. The root cause of this limitation stems from the \textit{Semantic Entanglement} problem in these extended workflows. In standard textual backpropagation, feedback signals mix local critiques with upstream contexts, leading to \textit{Attribution Ambiguity}. To address this challenge, we propose \textsc{TextResNet}, a framework that reformulates the optimization process to achieve precise signal routing via four key innovations. Firstly, in the forward pass, it enforces Additive Semantic Deltas to preserve an Identity Highway for gradient flow. Secondly, in the backward pass, it introduces Semantic Gradient Decomposition via a Semantic Projector to disentangle feedback into causally independent subspaces. Thirdly, it implements Causal Routing, which routes projected signals to their specific components. Finally, it performs Density-Aware Optimization Scheduling to leverage the disentangled signals to dynamically allocate resources to key system bottlenecks. Our results show that \textsc{TextResNet} not only achieves superior performance compared to TextGrad, but also exhibits remarkable stability for agentic tasks in compound AI systems where baselines collapse. Code is available at \url{https://github.com/JeanDiable/TextResNet}.
\end{abstract}

\section{Introduction}
\label{sec:intro}


The shift from monolithic LLMs to Compound AI Systems (CAS) enables complex, long-horizon tasks via component orchestration~\citep{ orchestrate,chen2024more, yao2023tree, reflexion, agentcoder, yu2014reputation, pan2016efficient}. However, the discrete nature of these systems poses a fundamental optimization challenge~\citep{dsp, mipro, aflow}. Unlike neural networks, CAS cannot be optimized via standard automatic differentiation, leaving it reliant on unscalable manual engineering or black-box search~\citep{zhou2022large, evoprompt, llm_as_opt}.

\begin{figure*}[t]
    \centering
    \includegraphics[width=\linewidth]{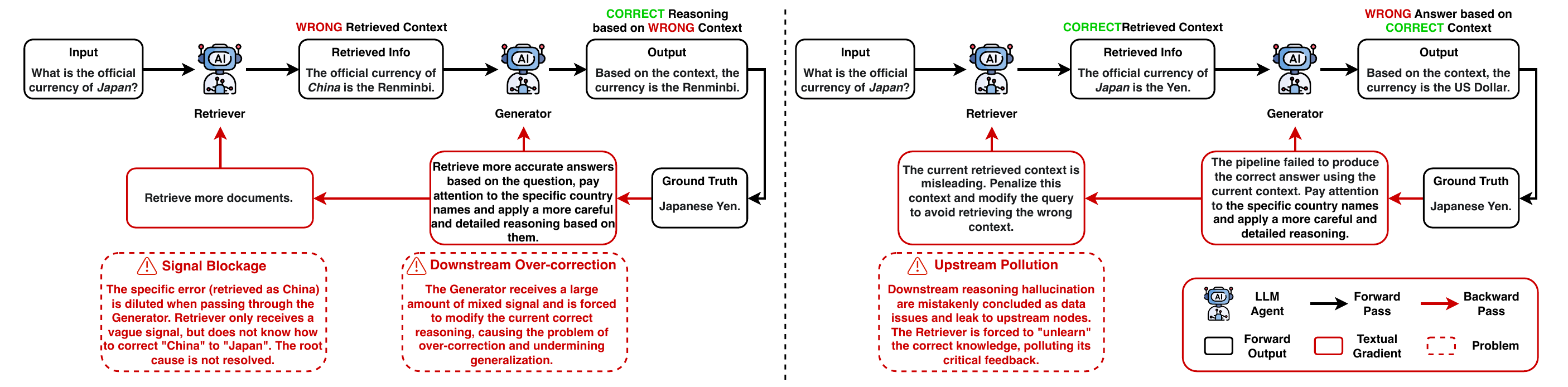} 
    \caption{\textbf{Optimization Problems in Compound AI Systems,} an example through a simplified two-agent QA pipeline ( Info Retriever $\rightarrow$ Answer Generator). We identify three critical failure modes arising from \textit{Attribution Ambiguity} in standard textual backpropagation. 
    (a) \textbf{Signal Blockage}: Critical feedback fails to propagate to the upstream node. (b) \textbf{Downstream Over-correction}: Downstream nodes are forced to hallucinate fixes for upstream.
    (c) \textbf{Upstream Pollution}: Downstream reasoning errors mistakenly being concluded and leak to upstream nodes.}
    \label{fig:intro}
    \vspace{-4mm}
\end{figure*}

A promising breakthrough is ``differentiation via text'' (e.g., TextGrad~\citep{textgrad}), which treats CAS as differentiable graphs. While effective for shallow chains, this paradigm fails in extended workflows due to \textbf{Semantic Entanglement}: feedback signals mix local critiques with noisy upstream context~\citep{interm_failures, reasoning_disruption,multiagent_fail, faulty_multiagent_resilience}. Without disentangling these signals, the optimizer faces \textbf{Attribution Ambiguity}, leading to three distinct problems that hinder optimization (Fig.~\ref{fig:intro}):

\begin{enumerate}
    \vspace{-2mm}
    \item \textbf{Signal Blockage:} Feedback becomes too generic before it reaches the upstream component that caused the error, leaving the root cause unaddressed, often shown as general conclusion of feedback.
    \vspace{-2mm}
    \item \textbf{Downstream Over-correction:} A downstream component is asked to hallucinate fixes for an error caused by upstream components, even when its own output is faithful to its input.
    \vspace{-2mm}
    \item \textbf{Upstream Pollution:} Feedback from a downstream error is incorrectly concluded and sent to upstream, causing a previously correct component to change.
    \vspace{-2mm}
\end{enumerate}

Fundamentally, these problems arise because standard textual backpropagation treats state transitions as unconstrained rewrites, making it difficult to tell what should be preserved and what should be changed.

To resolve this, we draw on residual learning theories like Deep Delta Learning (DDL)~\citep{zhang2025deepdelta} and Manifold-Constrained Hyper-Connections (mHC)~\citep{xie2025mhc}, which address signal degradation via geometric constraints~\citep{he2016deep, srivastava2015highway}. These frameworks suggest that stable learning requires an ``Identity Highway'' where layers operate as controlled residual updates rather than arbitrary rewrites. We hypothesize that this ``geometric disentanglement'' adapts to the semantic space: by structurally separating ``Identity'' (upstream context) from ``Delta'' (local reasoning), we can resolve semantic entanglement.

Building on these insights, we introduce {\textsc{TextResNet}}, a unified optimization framework designed to decouple and precisely route textual gradients in Compound AI Systems. Unlike prior approaches that treat the backward pass as a flat conversation~\citep{textgrad, reflexion}, \textsc{TextResNet} reformulates optimization as a structured routing problem via four key innovations:
\begin{itemize}
    \item \textbf{Forward Pass via Additive Semantic Deltas:} We redefine state evolution not as lossy rewriting, but as residual refinement (Context $\oplus$ Delta). 
    This enforces a structural Identity Highway that explicitly keeps historical context available while recording each component's new contribution separately, reducing Signal Blockage.

    \item \textbf{Backward Pass via Semantic Projector and Causal Routing:} We introduce a Semantic Projector as symbolic differentiator to decompose feedback into causally independent subspaces (Local vs. Upstream). This further enables Causal Routing: the system emits semantic \texttt{STOP\_GRADIENT} signals for pure local failures (preventing Upstream Pollution) while allowing upstream errors to directly flow upward (preventing Downstream Over-correction).
    
    \item \textbf{Density-Aware Optimization Scheduling:} Leveraging the disentangled signals, we could track the density of strictly local errors as an unbiased estimator of each component's contribution to the global loss. This allows the scheduler to dynamically allocate the optimization budget to the system's true bottlenecks via Boltzmann sampling.
\end{itemize}

We empirically validate \textsc{TextResNet} on four challenging benchmarks. Our results show that \textsc{TextResNet} not only achieves superior performance compared to TextGrad, but also exhibits remarkable stability in deep chains where baselines collapse. In-depth analysis confirms that our architecture successfully disentangles optimization signals, effectively neutralizing the effects of input noise and preventing error propagation.

Our contributions are summarized as follows:

\begin{itemize}
\vspace{-3mm}
   \item \textbf{Problem Identification:} We identify {Semantic Entanglement} and {Attribution Ambiguity} as root causes of TextGrad-style optimization failure in CAS, leading to three failure modes: Signal Blockage, Downstream Over-correction, and Upstream Pollution.
   \item \textbf{Unified Framework Design:} We propose \textsc{TextResNet}, which resolves these issues via four innovations: {Additive Semantic Deltas}, {Semantic Projector}, {Causal Routing}, and {Density-Aware Scheduling}.
   \item \textbf{Empirical Validation:} We demonstrate the effectiveness of \textsc{TextResNet} through comprehensive experiments on four diverse benchmarks.
\end{itemize}

\section{Related Work}
\label{sec:related_work}


\noindent{\textbf{Compound AI Systems and Optimization Challenges.}}
The shift from monolithic LLMs to Compound AI Systems (CAS) enables complex, long-horizon tasks~\citep{compound-ai-blog, orchestrate, guo2025comprehensive}, formalized by frameworks like LangChain and DSPy~\citep{dsp, aflow, admn}. However, optimizing these modular graphs faces {Attribution Ambiguity}, where downstream failures often result from compounded upstream retrieval or reasoning errors~\citep{multiagent_fail, faulty_multiagent_resilience, interm_failures}. Existing methods, including manual engineering and black-box search~\citep{zhou2022large, llm_as_opt, search, evoprompt}, lack the structural awareness to assign credit accurately, scaling poorly with complexity and failing to isolate root causes~\citep{a-bad-bo}.


\noindent{\textbf{Textual Differentiation.}}
TextGrad~\citep{textgrad} pioneered ``differentiation via text,'' treating CAS as differentiable graphs. However, by modeling forward passes as lossy rewriting and backward passes as flat conversations~\citep{reflexion, self_refine, dylan}, this paradigm suffers from Semantic Entanglement, leading to the three problems identified above. 
To address optimization in compound systems, OPTIMAS~\citep{optimas} decomposes objectives via Local Reward Functions~\citep{mmoa_rag, dcir}. However, this approach relies on Reinforcement Learning (RL) with dense component-level supervision, incurring high costs for data annotation and auxiliary reward modeling~\citep{adas, armap}. 
In contrast, \textsc{TextResNet} proposes a training-free architectural solution that enforces precise signal routing. Notably, our framework is orthogonal to OPTIMAS: while OPTIMAS relies on RL for policy updates, it lacks an explicit mechanism for causal error attribution, which \textsc{TextResNet} structurally enforces.

\noindent{\textbf{Residual Learning and Geometric Constraints.}}
Deep learning addresses signal degradation via ResNet's identity skip connections~\citep{he2016deep}. Recent theories, including Deep Delta Learning (DDL)~\citep{zhang2025deepdelta} and Manifold-Constrained Hyper-Connections (mHC)~\citep{xie2025mhc}, formalize residual learning through dynamic gating and manifold constraints to control state evolution~\citep{liu2019darts, chen2023parameter}. We rigorously map these geometric principles to discrete textual optimization. By redefining the forward pass as a ``semantic residual'' (Identity + Delta) and implementing Semantic Gradient Decomposition, \textsc{TextResNet} semantically disentangles local defects from upstream context, bringing the stability and depth-scalability of residual networks to CAS.

\begin{figure*}[t]
    \centering
    \includegraphics[width=\linewidth]{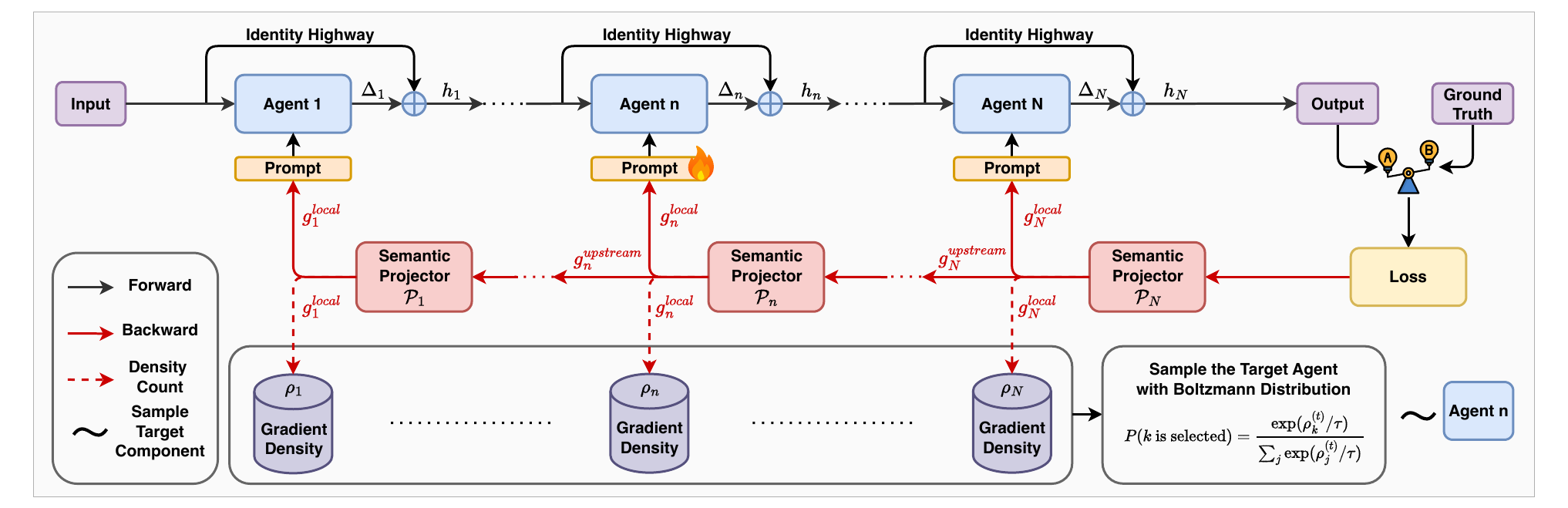} 
    \caption{\textbf{Overview of the \textsc{TextResNet} Framework.} Our approach reformulates optimization as a structured semantic routing problem across three stages:
    (1) \textbf{Forward Pass}: Agents generate {Additive Semantic Deltas} ($\Delta$) rather than rewrites, establishing an {Identity Highway} that preserves upstream context for attribution.
    (2) \textbf{Backward Pass}: The {Semantic Projector} ($\mathcal{P}$) enforces {Semantic Gradient Decomposition}, projecting feedback into causally independent subspaces ($g^{\text{local}}, g^{\text{upstream}}$) to implement precise {Causal Routing}.
    (3) \textbf{Optimization}: A {Density-Aware Scheduler} tracks the accumulation of local errors (Gradient Density $\rho$) to dynamically allocate the optimization budget to true system bottlenecks via Boltzmann sampling.}
    \label{fig:pipeline}
\end{figure*}

\section{Preliminaries and Problem Formulation}
\label{sec:preliminaries}

In this section, we formalize Compound AI Systems within the framework of Stochastic Computation Graphs (SCGs)~\citep{schulman2015gradient}. We then characterize the limitations of standard textual backpropagation, identifying {Structural Ambiguity} as the root cause of the problems described in Section \ref{sec:intro}. Finally, drawing from residual learning theory, we establish the necessary geometric design principles required to resolve these problems.

\subsection{Compound AI Systems as Stochastic Computation Graphs}

We represent a Compound AI System (CAS) as a directed acyclic graph $\mathcal{G} = (\mathcal{V}, \mathcal{E})$. Nodes $v \in \mathcal{V}$ represent computational modules (e.g., LLM agents, tools), and edges $(u, v) \in \mathcal{E}$ represent the flow of semantic information.

\noindent{\textbf{Forward Pass:}}
Let the nodes be topologically ordered as $v_1, \dots, v_L$. Each node $v_l$ maintains a learnable configuration $\theta_l$ (e.g., system prompts) and produces a hidden state $h_l \in \mathcal{H}$ (output text). In standard agentic workflows, the state transition is governed by a stochastic operator $\mathcal{M}_l$:
\begin{equation}
    h_l \sim P(h_l \mid h_{l-1}, \theta_l).
\end{equation}
Crucially, standard LLM agents typically model $\mathcal{M}_l$ as an unconstrained {rewriting} process~\citep{react, reflexion}. Unlike Residual Networks where $h_l = h_{l-1} + f(h_{l-1})$ ~\citep{he2016deep}, an LLM agent consumes the input $h_{l-1}$ and generates a completely new sequence $h_l$. This creates a \textbf{lossy state transition}, where the identity of the input context is implicitly merged with the transformation logic. Consequently, the final output $y = h_L$ is a composite function where the contribution of $\theta_l$ is coupled with the propagated features of $h_{l-1}$.

\noindent{\textbf{Optimization Objective.}}
Given a task distribution $\mathcal{D}$ and a global reward function $R(y, y^*)$, we seek to maximize:
\vspace{-1mm}
\begin{equation}
    \Theta^* = \operatorname*{arg\,max}_{\Theta} \mathbb{E}_{x \sim \mathcal{D}, h \sim P_\Theta(\cdot|x)} [R(h_L, y^*)].
\end{equation}
\vspace{-4mm}

\subsection{The Problem: Structural Ambiguity}

Recent frameworks approximate gradients via natural language feedback~\citep{textgrad}. We define a {Textual Gradient} $g_l$ as a semantic critique generated by a ``Backward LLM'' operator $\mathcal{B}$: $g_l = \mathcal{B}(g_{l+1}, h_l, h_{l-1}, \theta_{l+1})$.

In differentiable programming, the chain rule relies on the independence of partial derivatives (local vs. transport Jacobians). However, in the discrete textual space, standard backpropagation treats the backward pass as a flat conversation. This leads to a fundamental credit assignment problem:

\begin{definition}[Semantic Attribution Ambiguity]
\label{def:structural_ambiguity}
Let $\mathcal{E}_{\text{local}}$ and $\mathcal{E}_{\text{upstream}}$ be the sets of true error sources originating from the local parameter $\theta_l$ and the input state $h_{l-1}$, respectively. Structural Ambiguity occurs when the backward operator $\mathcal{B}$ generates a feedback signal $g_l$ that mixes these sources, such that the optimizer cannot determine if the correction should target the logic ($\theta_l$) or the context ($h_{l-1}$).
\end{definition}

This ambiguity leads to three problems: {Signal Blockage} occurs when $g_l$ fails to capture errors in $\mathcal{E}_{\text{upstream}}$; {Downstream Over-correction} occurs when $g_l$ falsely attributes upstream errors to $\theta_l$; and {Upstream Pollution} occurs when $g_l$ falsely propagates local reasoning errors to $h_{l-1}$.

\subsection{Geometric Principles for Stable Optimization}
\label{sec:geometric_conditions}

To resolve Structural Ambiguity, we look to the geometric principles of Deep Delta Learning (DDL) ~\citep{zhang2025deepdelta} and Manifold-Constrained Hyper-Connections (mHC) ~\citep{xie2025mhc}. We map these theoretical insights to the semantic space to define two core design principles for our framework:

\begin{definition}[Design Principle 1: Lossless Context Preservation]
To prevent Signal Blockage, the forward mapping must satisfy the {Reconstructible Property}. Instead of lossy rewriting, the state evolution should be modeled as an additive refinement:
\begin{equation}
    h_l = h_{l-1} \oplus \Delta_l.
\end{equation}
This ensures that the upstream context $h_{l-1}$ remains explicitly accessible in $h_l$, guaranteeing that the ``Identity'' of the input is preserved for gradient attribution at any depth $L$~\citep{xie2025mhc}.
\end{definition}

\begin{definition}[Design Principle 2: Semantic Disentanglement]
To prevent Downstream Over-correction and Upstream Pollution, the optimization signal must be structurally disentangled. We require the feedback $g_l$ to be projected into two causally independent subspaces, $\mathcal{S}_{\text{local}}$ and $\mathcal{S}_{\text{upstream}}$.
This condition implies that an update to the local mechanism $\theta_l$ derived from the local feedback component imposes no constraints on the validity of the upstream context $h_{l-1}$, thereby isolating local defects from upstream defects.~\citep{zhang2025deepdelta}.
\end{definition}
 
These two principles form the basis for \textsc{TextResNet}: Principle 1 leads to a residual forward architecture, while Principle 2 leads to a projective backward operator.

\section{Methodology: The \textsc{TextResNet} Framework}
\label{sec:method}

We introduce \textsc{TextResNet}, a unified optimization framework designed to decouple and precisely route textual gradients in Compound AI Systems. Our approach reformulates the optimization process from a flat textual conversation into a structured semantic routing problem.

\subsection{Overview of \textsc{TextResNet}}
\label{sec:method_overview}

The workflow of \textsc{TextResNet} operates in three synchronized stages, as illustrated in Fig.~\ref{fig:pipeline}. 

First, in the \textbf{Forward Pass}, we prevent {Signal Blockage} by enforcing an \textbf{Identity Highway}. Each component acts as a learner of semantic deltas (increments) rather than performing full state rewrites. This explicitly separates the historical context from the local transformation.

Second, in the \textbf{Backward Pass}, we introduce the \textbf{Semantic Projector} as symbolic differentiator. Instead of propagating mixed feedback, this operator dynamically projects incoming error signals onto disentangled semantic subspaces (Local vs. Upstream). This allows the system to implement \textbf{Causal Routing}: specifically, when an error is identified as purely local, the system emits a semantic \texttt{STOP\_GRADIENT} signal to the upstream node. This explicit stop signal prevents {Upstream Pollution} by confirming that the upstream output was valid. Conversely, pure upstream errors flow directly along the identity highway to prevent {Downstream Over-correction}. 

Finally, an \textbf{Optimization Scheduler} leverages these disentangled signals. By tracking the density of strictly local errors, the scheduler could gain an unbiased estimate of each component's true contribution to the global loss, allowing for efficient allocation of optimization resources via Boltzmann sampling.

\subsection{Forward Pass: Additive Semantic Deltas}
\label{sec:forward_residual}

To support semantic decomposition, the forward pass must structurally separate the ``identity'' (context) from the ``transformation'' (local reasoning). 

\noindent\textbf{The Additive Operator.} We define the output state $h_l$ of node $v_l$ as the semantic summation of its preserved input context and a generated residual:
\begin{equation}
    h_l = h_{l-1} \oplus \mathcal{F}(h_{l-1}; \theta_l),
\end{equation}
where $\mathcal{F}(\cdot; \theta_l)$ represents the component's specific contribution (the {Semantic Delta} $\Delta_l$), and $\oplus$ denotes a semantic addition operator. In the discrete textual domain, $\oplus$ is instantiated as structured concatenation or patch application (e.g., appending a reasoning step or applying a code diff), ensuring that the information in $h_{l-1}$ remains explicitly accessible in $h_l$ (see Appendix~\ref{app:forward_structured_residual} for implementation detail).

\begin{proposition}[Information Preservation in Residual Chains]
\label{prop:info_preservation}
Let $\mathcal{H}$ be the semantic state space. In a deep chain of length $L$ employing additive semantic deltas, the upstream context $h_0$ remains theoretically accessible in the final state $h_L$. Unlike lossy rewriting chains where information decays exponentially with depth, our additive structure ensures that upstream context is available for causal attribution during the backward pass (detailed in Appendix~\ref{app:proofs}).
\end{proposition}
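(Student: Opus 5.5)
The plan is to make the statement precise enough to be provable, then argue by induction on depth. I will model $\oplus$ abstractly as an operator $\oplus:\mathcal{H}\times\mathcal{H}\to\mathcal{H}$ that admits a \emph{left projection}. That is, there is a map $\pi:\mathcal{H}\to\mathcal{H}$ with $\pi(a\oplus b)=a$ for all $a,b$. This is exactly what the \emph{Reconstructible Property} of Design Principle 1 asks for. For the structured-concatenation instantiation (delimited segments), $\pi$ simply strips the last delimited segment. For patch application, I will store the patch sequence alongside the state so that the inverse patch is available. ``Theoretically accessible'' will then mean that $h_0$ is a deterministic function of $h_L$. In information-theoretic terms this gives $I(h_0;h_L)=H(h_0)$, i.e.\ the conditional entropy $H(h_0\mid h_L)=0$.

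The first step is the one-step reconstruction. From $h_l=h_{l-1}\oplus\mathcal{F}(h_{l-1};\theta_l)$ we get $\pi(h_l)=h_{l-1}$. This holds for every realization of the stochastic delta $\mathcal{F}$, so it does not depend on the sampling in the forward operator $\mathcal{M}_l$.

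The second step is induction on the depth. I will show $\pi^{L-k}(h_L)=h_k$ for all $0\le k\le L$, and in particular $h_0=\pi^{L}(h_L)$. The same argument recovers each intermediate $\Delta_l$ as the right component, which the backward pass needs in order to attribute blame to $\theta_l$ versus $h_{l-1}$. Since $h_0$ is a function of $h_L$, the data-processing inequality is tight here and no information about $h_0$ is lost, regardless of $L$.

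The third step is the contrast with lossy rewriting. For $h_l\sim P(h_l\mid h_{l-1},\theta_l)$, the chain $h_0\to h_1\to\cdots\to h_L$ is Markov. I will assume each rewrite channel has a strong data-processing contraction coefficient $\eta_l\le\eta<1$, for example via a Dobrushin coefficient that is bounded away from $1$. Under that assumption, $I(h_0;h_L)\le \eta^{L}\,H(h_0)$, which is the ``exponential decay'' in the statement.

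The main obstacle is this contrast claim. It holds only under the contraction assumption, and a deterministic injective rewrite would violate it. So I will state it explicitly as a modeling assumption on generic LLM rewrites rather than prove it in general. The secondary subtlety is finite context length. Accessibility in $h_L$ does not mean that a bounded-context backward LLM can use all of it, which is why the statement says ``theoretically.'' I will remark that this is the reason for the summarization and structured layout described in Appendix~\ref{app:forward_structured_residual}, and that it is not part of the formal claim.
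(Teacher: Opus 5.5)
Your proposal is correct and follows essentially the paper's argument: the paper also defines a history projection $\Pi_{\text{history}}(h_l)=h_{l-1}$ from the structural invertibility of $\oplus$, and contrasts it with SDPI-driven exponential decay of $I(h_L;h_0)$ under contraction coefficients $\eta_l<1$ (plus a qualitative non-invertibility remark for compressive rewrites), while you make the depth induction $h_0=\pi^{L}(h_L)$, the conclusion $H(h_0\mid h_L)=0$, and the contraction assumption explicit, and your bound $I(h_0;h_L)\le\eta^{L}H(h_0)$ is correctly indexed, whereas the paper's $\bigl(\prod_{l=1}^{L}\eta_l\bigr)I(h_1;h_0)$ has one contraction factor too many. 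One minor slip: recovering each $\Delta_l$ ``by the same argument'' needs a right projection (or injectivity of $b\mapsto a\oplus b$) beyond your left projection $\pi$, but the claim about $h_0$ does not depend on this.
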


\noindent\textbf{Context-Window Management.}
While \textsc{TextResNet} encourages an ``identity highway'' across depth, our implementation does not rely on monotonic concatenation of all intermediate texts. Instead, the forward execution maintains a shared \emph{dict context} that accumulates component outputs by key-wise update, and each component only receives the \emph{minimal required slice} specified by its \texttt{input\_fields}.
Thus, the identity highway should be understood as a structured state-preservation constraint implemented through key-wise memory and deterministic field selection, rather than as unlimited full-context concatenation.
Concretely, at depth $\ell$, the component call is formed from $x_\ell := \Pi_\ell(h_\ell)$ where $\Pi_\ell$ is the interface-level projection (field selection) induced by \texttt{input\_fields}; this prevents most components from ever seeing the full trajectory. More details are provided in Appendix ~\ref{app:context_control_detailed}.

\begin{table*}[t]
\centering
\small
\caption{Performance comparison across compound AI benchmarks. Best in bold and second best with underline.}
\label{tab:main_results}
\resizebox{\textwidth}{!}{%
\begin{tabular}{lcccc}
\toprule
\textbf{Method} & \textbf{HotpotQA} (F1) & \textbf{BigCodeBench} (Pass\%) & \textbf{PubMedQA} (Acc) & \textbf{STARK-PRIME} (MRR) \\
\midrule
CoT & 33.92 $\pm$ 0.76 & 34.15 $\pm$ 1.43 & 57.34 $\pm$ 1.12 & 39.76 $\pm$ 0.84 \\
HBC      & 21.16 $\pm$ 0.97 & 27.78 $\pm$ 2.08 & 58.80 $\pm$ 0.58 & 36.95 $\pm$ 0.59 \\
DSPy      & \underline{44.90 $\pm$ 0.32} & 33.81 $\pm$ 2.75 & \underline{60.26 $\pm$ 0.40} & \underline{41.40 $\pm$ 0.04} \\
TextGrad          & 24.86 $\pm$ 1.19 & \underline{35.71 $\pm$ 0.10} & 56.96 $\pm$ 2.24 & 41.31 $\pm$ 1.67 \\
TextGrad (w/ Sum)   & 24.12 $\pm$ 1.25 & 35.12 $\pm$ 0.67 & 56.12 $\pm$ 1.85 & 40.72 $\pm$ 1.21 \\
\midrule
\textbf{\textsc{TextResNet} (Ours)} & \textbf{46.23 $\pm$ 1.15} & \textbf{37.86 $\pm$ 0.45} & \textbf{60.31 $\pm$ 1.51} & \textbf{41.75 $\pm$ 0.85} \\
\bottomrule
\end{tabular}%
}
\end{table*}

\subsection{Backward Pass: Semantic Projector and Causal Routing}
\label{sec:backward_projector}

To resolve {Structural Ambiguity}, we implement a \textbf{Semantic Projector} $\mathcal{P}$ that enforces {Semantic Gradient Decomposition} as the symbolic differentiator.

\noindent\textbf{Semantic Gradient Decomposition.} We assume that the entangled gradient $g_l$ can be projected into two causally independent components: $g^{\text{local}}$ (actionable by $\theta_l$) and $g^{\text{upstream}}$ (actionable by $v_{l-1}$). The projector $\mathcal{P}_l$ performs this decomposition as follows:
\begin{align}
    g^{\text{local}}_{l} &= \mathcal{P}^{\text{local}}_l(g_l), \\
    g^{\text{upstream}}_{l} &= \mathcal{P}^{\text{upstream}}_l(g_l).
\end{align}
In practice, each $\mathcal{P}_l$ is parameterized by a Backward LLM instructed to perform causal attribution. Based on this decomposition, we implement a \textbf{Causal Routing}:

\noindent\textbf{Pure Local Defect:} If the Projector determines the error is specific to the current component, it emits a semantic \texttt{STOP\_GRADIENT} signal to the upstream node, explicitly informing it that its output is valid.
    
\noindent\textbf{Pure Upstream Defect:} If the Projector determines the error stems from invalid input, it sets $g^{\text{local}}_{l} = \emptyset$. The signal flows directly along the Identity Highway ($g^{\text{upstream}}_{l} \neq \emptyset$) without triggering local updates, protecting the current component from overfitting to upstream noise.
    
\noindent\textbf{Mixed Fault:} If both subspaces contribute to the error, the signal is split. $g^{\text{local}}_{l}$ updates $\theta_l$, while $g^{\text{upstream}}_{l}$ will propagates to upstream $v_{l-1}$.

To illustrate using a RAG task with two agents: a Retriever followed by a Generator. The Causal Routing of the Generator has three possibilities:
If the retrieved context of the Retriever is correct but its generated answer is wrong, it is a Pure Local Defect. In contrast, if the retrieved context is wrong and the answer follows it, it is a Pure Upstream Defect. For the third case, if the context is partial and the answer contains hallucinations, it is a Mixed Fault.

\begin{proposition}[Bounded Error Propagation Analysis]
\label{prop:snr_improvement}
Let $\text{SNR}(g_l)$ be the ratio of actionable signal variance to non-actionable noise variance at depth $l$. We model the error accumulation in the semantic space as a linear additive process with variance $\sigma^2$.
Under this simplified model, if there exists a non-zero probability $p$ that a noise component is identified as local-specific and filtered out, the Causal Routing mechanism ensures that the noise variance in $\text{SNR}(g_l^{\text{TextResNet}})$ is bounded by a constant $\frac{1-p}{p}\sigma^2$ independent of depth $L$. In contrast, without routing, the noise variance grows linearly with $L$.
\end{proposition}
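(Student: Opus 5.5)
We prove the statement by formalizing the "linear additive process" assumption as a recurrence on the noise variance carried by the backward signal. Let $N_l$ denote the non-actionable noise component of the textual gradient as it arrives at depth $l$, with variance $V_l := \mathrm{Var}(N_l)$. We model each backward hop from $v_{l+1}$ to $v_l$ as injecting a fresh noise term $\epsilon_l$ with variance $\sigma^2$, independent of everything propagated so far. This reflects the local critique mixing in new, unattributed content. We hold the actionable signal variance fixed at some $S>0$, so that $\text{SNR}(g_l)=S/V_l$ and everything reduces to controlling $V_l$.

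\textbf{Baseline.} Without routing, the noise accumulates unfiltered: $N_l = N_{l+1} + \epsilon_l$. Independence gives $V_l = V_{l+1} + \sigma^2$. Starting from $V_{L}=\sigma^2$ at the output node and unrolling, we get $V_{L-k} = (k+1)\sigma^2$. The noise reaching the root therefore has variance $L\sigma^2$, which grows linearly in the depth, and the SNR decays as $\Theta(1/L)$.

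\textbf{With Causal Routing.} We model the Semantic Projector as an independent Bernoulli gate at each hop. With probability $p$ the accumulated noise is classified as local-specific and cut by the \texttt{STOP\_GRADIENT} signal; with probability $1-p$ it continues up the identity highway. We write this as $N_l = B_l(N_{l+1}+\epsilon_l)$ with $B_l\sim\mathrm{Bernoulli}(1-p)$, independent of the noise, and then take second moments. For mean-zero noise this gives $V_l = (1-p)(V_{l+1}+\sigma^2)$, an affine contraction with factor $1-p<1$. Its fixed point solves $V^*=(1-p)(V^*+\sigma^2)$, so $V^*=\frac{1-p}{p}\sigma^2$. Since the map is monotone and the chain starts at or below $V^*$, induction on the number of hops shows $V_l\le V^*$ for every $l$ and every $L$. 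Equivalently, unrolling gives $V_{L-k}=\sigma^2\sum_{j=1}^{k+1}(1-p)^j\le\sigma^2\frac{1-p}{p}$ by the geometric series. This yields the depth-independent bound, and hence $\text{SNR}(g_l^{\text{TextResNet}})\ge \frac{pS}{(1-p)\sigma^2}$, uniformly in $L$.

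\textbf{Main obstacle.} The hard step is justifying the modelling reductions rather than the algebra. First, the filtering events must act multiplicatively and independently of the noise magnitude; this needs $p$ to be a lower bound on the projector's per-hop detection probability, conditional on the past. Second, the injected noise must be uncorrelated across hops so that the cross terms vanish. If correlations are present, I would redo the recursion with a bound $\mathrm{Cov}\le\rho\sigma^2$, aiming for a constant of the form $\frac{1-p}{p}\sigma^2(1+O(\rho))$. I would state both assumptions explicitly as part of the ``simplified model'' the proposition invokes. I would also note that the mean-zero assumption is what lets $\mathbb{E}[B_l^2]=1-p$ pass directly to the variance.
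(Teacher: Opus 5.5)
Your proposal is correct and follows the paper's derivation: the same additive recurrence $V_k = V_{k-1}+\sigma^2$ gives linear growth without routing, and the same Bernoulli-gated recurrence $V_k=(1-p)(V_{k-1}+\sigma^2)$ is summed as a geometric series to the depth-independent bound $\frac{1-p}{p}\sigma^2$ (the paper writes your gate $B_l$ as $1-Z_k$ with $Z_k\sim\mathrm{Bernoulli}(p)$). Your fixed-point/monotonicity argument and your explicit independence and mean-zero assumptions only add detail; the paper states these as its Assumption 4.1 and its zero-mean simplification.
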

\noindent This analysis (detailed in Appendix~\ref{app:proofs}) suggests that our filtering strategy effectively mitigates the ``Upstream Pollution'' problem by truncating irrelevant feedback chains, maintaining stable optimization signals even in deep chains.

\subsection{Optimization Scheduler: Density-Aware Scheduling}
\label{sec:scheduler}
With signals routed to their likely sources, the volume of local feedback accumulating at a node provides a practical proxy for its contribution to the global error.
We define the {Gradient Density} $\rho_i^{(t)}$ for node $v_i$ as the accumulated count of locally-projected feedback messages ($g^{\text{local}}_{i}$) since the last update. Formally, $\rho_{i}^{(t)}=\sum_{\tau=t_{\text {last }}}^{t} \mathbb{I}\left(g_{i, \tau} \in \mathcal{S}_{\text {local }}\right)$, where $t_{\text{last}}$ is the step of the last update. We then allocate the optimization budget using a Boltzmann distribution to choose the component to optimize at each step:
\begin{equation}
    P(k \text{ is selected}) = \frac{\exp(\rho_k^{(t)} / \tau)}{\sum_{j} \exp(\rho_j^{(t)} / \tau)},
\end{equation}
where temperature parameter $\tau$ regulates the entropy of the scheduling policy, controlling the trade-off between focusing strictly on the worst-performing bottlenecks and maintaining gradients for other components.

This policy dynamically focuses compute on the system's true bottlenecks, implementing a robust form of coordinate descent~\citep{Wright1v5}. Unlike random scheduling which wastes resources on stable components, our density-aware approach accelerates convergence by prioritizing nodes with high local error rates~\citep{falkner2018bohb}.

\section{Experiments}
\label{sec:experiments}

We empirically evaluate \textsc{TextResNet} on four challenging Compound AI System benchmarks that require multi-step reasoning. Our experimental design aims to validate the framework's effectiveness in resolving {Attribution Ambiguity} and ensuring precise signal routing in deep workflows.

\subsection{Experimental Setup}

\noindent{\textbf{Datasets and Benchmarks.}}
We evaluate \textsc{TextResNet} on four diverse multi-step reasoning and tool-orchestration benchmarks, which are proven diverse and  used~\cite{optimas}.
\textbf{HotpotQA}~\citep{hotpotqa} targets multi-hop RAG (full-wiki, distractor) using a 5-component pipeline (Rewriter, Extractor, Retriever, Hinter, Answerer) with 1000/250/100 train/dev/test splits.
\textbf{BigCodeBench}~\citep{bigcodebench} assesses complex code generation via a 4-component iterative refinement pipeline; we use a subset of the full-instruction split (500/25/70 tasks).
\textbf{PubMedQA}~\citep{pubmedqa} tests biomedical reasoning (yes/no/maybe) using the original ``expert'' split (475/25/500).
\textbf{STARK-PRIME}~\citep{stark} focuses on semi-structured retrieval and query optimization (495/51/96 queries).

\noindent{\textbf{Baselines.}}
We compare \textsc{TextResNet} against a diverse set of prompt optimization methods. 
\textbf{Chain-of-Thought (CoT)} ~\citep{wei2022chain} serves as the unoptimized lower bound.
\textbf{HBC} ~\citep{hbc} is a hierarchical imitation learning method adapted for component-level supervision.
\textbf{DSPy} ~\citep{dsp} represents the state-of-the-art in programmatic prompt compilation using the MIPRO optimizer~\citep{mipro}. 
\textbf{TextGrad} ~\citep{textgrad} is the primary baseline for textual backpropagation. To ensure a rigorous comparison, we also include \textbf{TextGrad+Sum} ~\citep{chen2025can}, an enhanced variant using summarization to compress feedback, addressing context overflow issues.

\begin{table}[t]
\centering
\footnotesize
\caption{\textbf{Step-wise Ablation Study.} We progressively add components to the TextGrad baseline to evaluate the performance gain. {Res}: Additive Semantic Deltas; {Proj}: Semantic Gradient Decomposition; {Route}: Causal Routing; {Sched}: Density-Aware Scheduling. Best in bold, second best underlined.}
\label{tab:stepwise_ablation}
\setlength{\tabcolsep}{6pt}
\begin{tabular}{cccc|cc}
\toprule
\multirow{2}{*}{\textbf{Res}} & \multirow{2}{*}{\textbf{Proj}} & \multirow{2}{*}{\textbf{Route}} & \multirow{2}{*}{\textbf{Sched}} & \textbf{HotpotQA} & \textbf{BigCode} \\
& & & & (F1) & (Pass\%) \\ 
\midrule
\xmark & \xmark & \xmark & \xmark & 24.86 & 35.71 \\
\cmark & \xmark & \xmark & \xmark & 32.15 & 36.07 \\
\cmark & \cmark & \xmark & \xmark & 36.69 & 36.50 \\
\cmark & \cmark & \cmark & \xmark & \underline{37.71} & \underline{36.70} \\
\cmark & \cmark & \cmark & \cmark & \textbf{46.23} & \textbf{37.86} \\
\bottomrule
\end{tabular}
\vspace{-5mm}
\end{table}

\noindent{\textbf{Implementation Details.}}
Across all benchmarks, we adopt the SCG configurations of OPTIMAS~\citep{optimas} to ensure topological consistency. However, we exclude OPTIMAS from direct comparison due to the fundamental divergence in supervision and training requirements discussed in Section~\ref{sec:related_work}. While \textsc{TextResNet} operates as a training-free optimizer that focus on precise signal attribution, OPTIMAS relies on constructing {Local Reward Functions} and performing {RL-based training} (often involving parameter fine-tuning). This heavy dependence on auxiliary reward modeling and expensive local annotations makes the direct comparison with our gradient routing approach unfair. For detailed experimental setups, please refer to Appendix~\ref{app:implementation}.

\subsection{Main Results}

Table~\ref{tab:main_results} summarizes the performance across all benchmarks. \textsc{TextResNet} achieves consistent improvements over all baselines, with particularly strong gains on complex reasoning benchmarks. 

Specifically, on the deep reasoning task HotpotQA, \textsc{TextResNet} surpasses the original TextGrad by a large margin (+21.37 F1). This gap suggests that TextGrad struggles with the 5-component pipeline's depth. Instead, our \textsc{TextResNet} effectively disentangles these signals, preserving the optimization direction and achieves better performance. Similarly, on BigCodeBench, where precise code corrections are critical along the chain, \textsc{TextResNet} achieves a 37.86\% pass rate, improving upon DSPy (33.81\%) and TextGrad (35.71\%). Notably, the TextGrad+Sum baseline often underperforms the vanilla TextGrad. This suggests that naive summarization leads to even worse semantic entanglement, whereas our residual decomposition preserves the full semantic entropy of the error signal and routes them to the correct components.

\begin{figure}[t]
    \centering
    \includegraphics[width=0.9\linewidth]{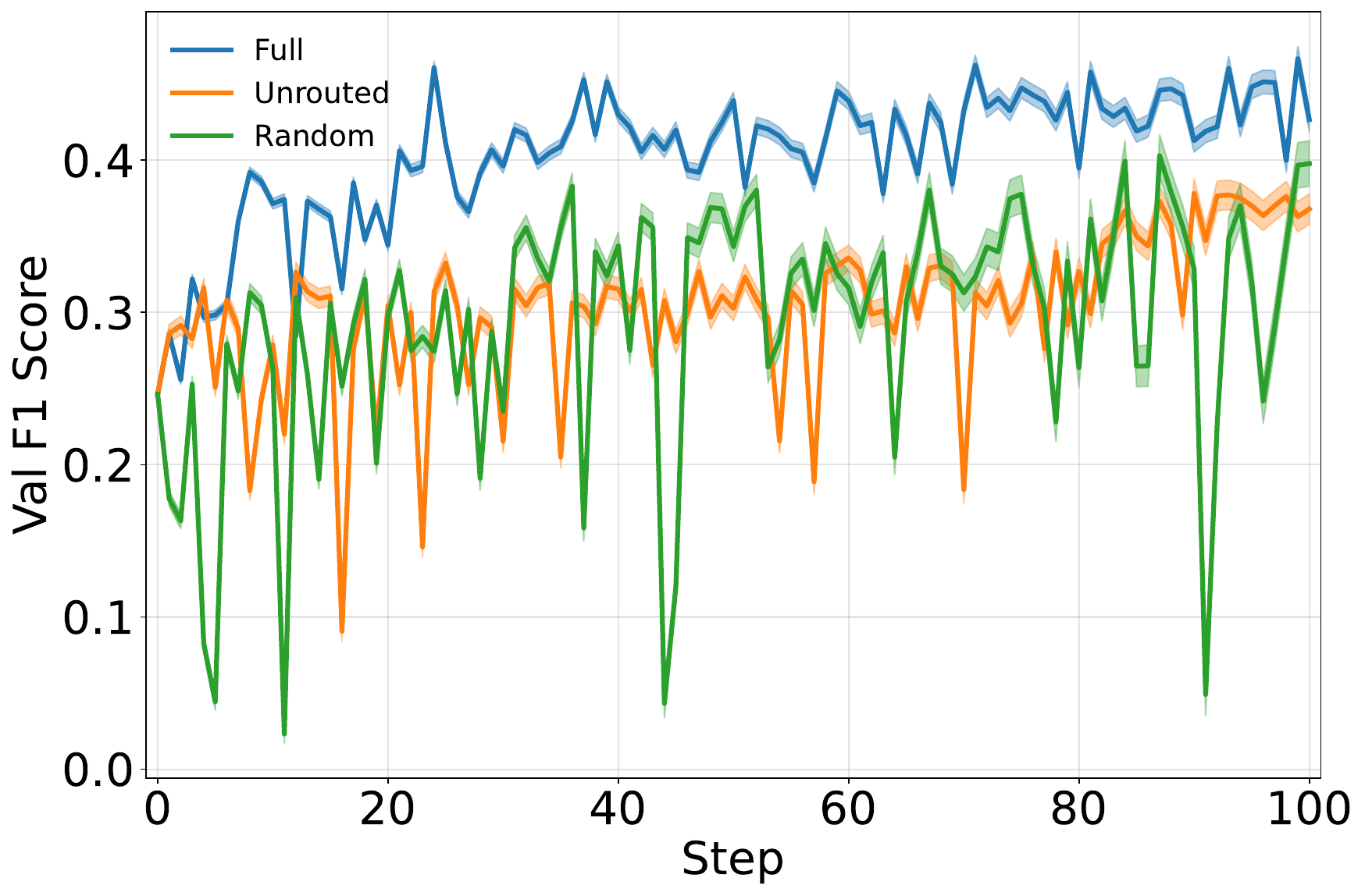} 
    \caption{\textbf{Optimization Trajectories.} The unrouted variant (Orange) removes Causal Routing but retains Density-Aware Scheduling; it stabilizes slowly due to noise accumulation. While the random variant (Green) keeps Causal Routing but removes Density-Aware Scheduling, it suffers from high variance. \textsc{TextResNet} (Blue) achieves stable and fast convergence.}
    \label{fig:ablation_curves}
    \vspace{-5mm}
\end{figure}

\section{Ablation Studies and In-depth Analysis}
\label{sec:analysis}

In this section, we break down \textsc{TextResNet} to investigate the internal mechanisms driving its performance. We aim to empirically verify two core hypotheses: (1) that the architectural components (Forward Residuals, Backward Projection, Causal Routing, and Density-Aware Scheduling) contribute collectively to performance; and (2) that the system correctly disentangles causal errors from noise and routes them to the correct components, effectively resolving the problems of {Attribution Ambiguity}. We conduct these analyses primarily on HotpotQA and BigCodeBench, as their long-horizon dependency chains provide a rigorous testbed for signal propagation stability. More in-depth analysis is provided in Appendix ~\ref{app:additional_experiments}.
In particular, Appendix~\ref{app:depth_vs_dependency} clarifies that chain depth is best viewed as a stress test for long attribution paths rather than the only driver of performance gain; the largest improvements appear when depth coincides with strong inter-component dependency.
Appendix~\ref{app:failure_mode_annotation} further provides a failure-mode annotation study and a side-by-side example showing how routed feedback resolves a concrete signal-blockage case.

\begin{figure}[t]
    \centering
    \includegraphics[width=0.9\linewidth]{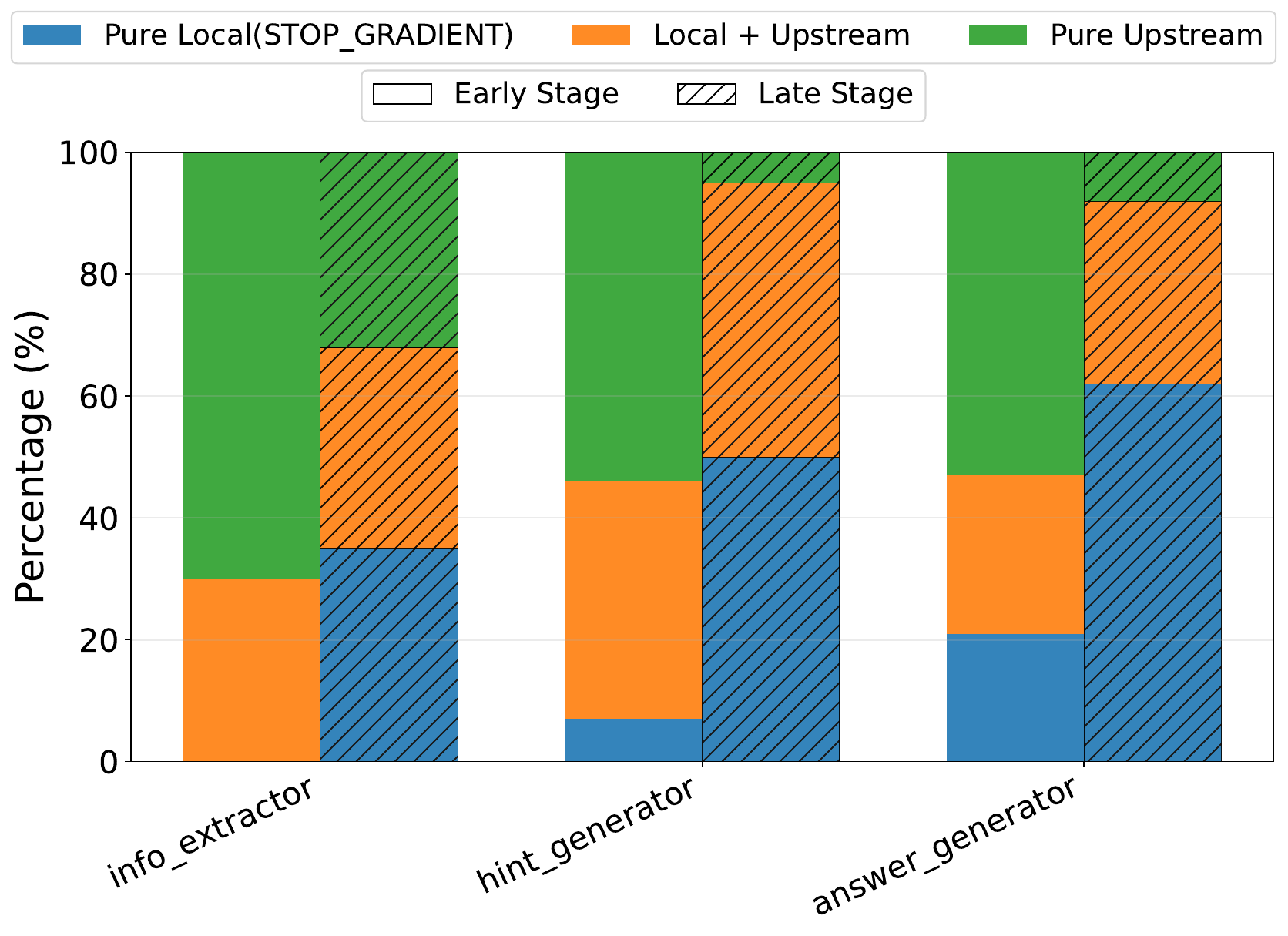}
    \caption{\textbf{Evolution of Error Attribution.} We focus on the three downstream learnable components (InfoExtractor, HintGenerator, AnswerGenerator). It could be observed that the system transfer from propagating errors upstream (Early Stage) to resolving them locally (Late Stage), resembling curriculum learning process. }
    \label{fig:dynamic_attribution}
    \vspace{-5mm}
\end{figure}

\subsection{Ablation Study}
\label{ssec:ablation}

To rigorously quantify the contribution of each module, we conduct a progressive ablation study starting from the TextGrad baseline. We incrementally incorporate our proposed components: (1) {Additive Semantic Deltas} in the forward pass; (2) {Semantic Gradient Decomposition} via the Semantic Projector in the backward pass; (3) {Causal Routing} via semantic filtering; and (4) {Density-Aware Scheduling}.

Table~\ref{tab:stepwise_ablation} summarizes the results, revealing a clear monotonic improvement. The incorporation of {Additive Semantic Deltas} alone yields a significant gain (+7.29 F1 on HotpotQA), confirming that the structural ``Identity Highway'' effectively mitigates information loss during forward propagation. Introducing {Semantic Gradient Decomposition} could provide significant gains in isolation (+4.54 F1), and its combination with {Causal Routing} further triggers a performance jump. Finally, the {Density-Aware Scheduling} further boosts performance (+8.52 F1), suggesting that allocating computational budget to nodes with high ``Gradient Density'' (true bottlenecks) is far more efficient than uniform or random scheduling, resulting in better performance with the same training steps.

\textbf{Optimization Dynamics.} Figure~\ref{fig:ablation_curves} visualizes the training trajectories of our \textsc{TextResNet} and its two ablation variants. The {Random Schedule} variant (Green) eliminates the Density-Aware Scheduling from the full \textsc{TextResNet}, it suffers from high variance and oscillation, indicating that the random scheduler is not chasing the real bottleneck component, but those without necessary modification needs. The {Unrouted} variant (Orange) eliminates the Causal Routing from the full \textsc{TextResNet}, it stabilizes the process but converges slowly. In contrast, the full \textsc{TextResNet} (Blue) demonstrates a steep convergence curve, rapidly identifying and resolving bottlenecks to reach a higher optimum.

\begin{figure}[t]
    \centering
    \includegraphics[width=0.9\linewidth]{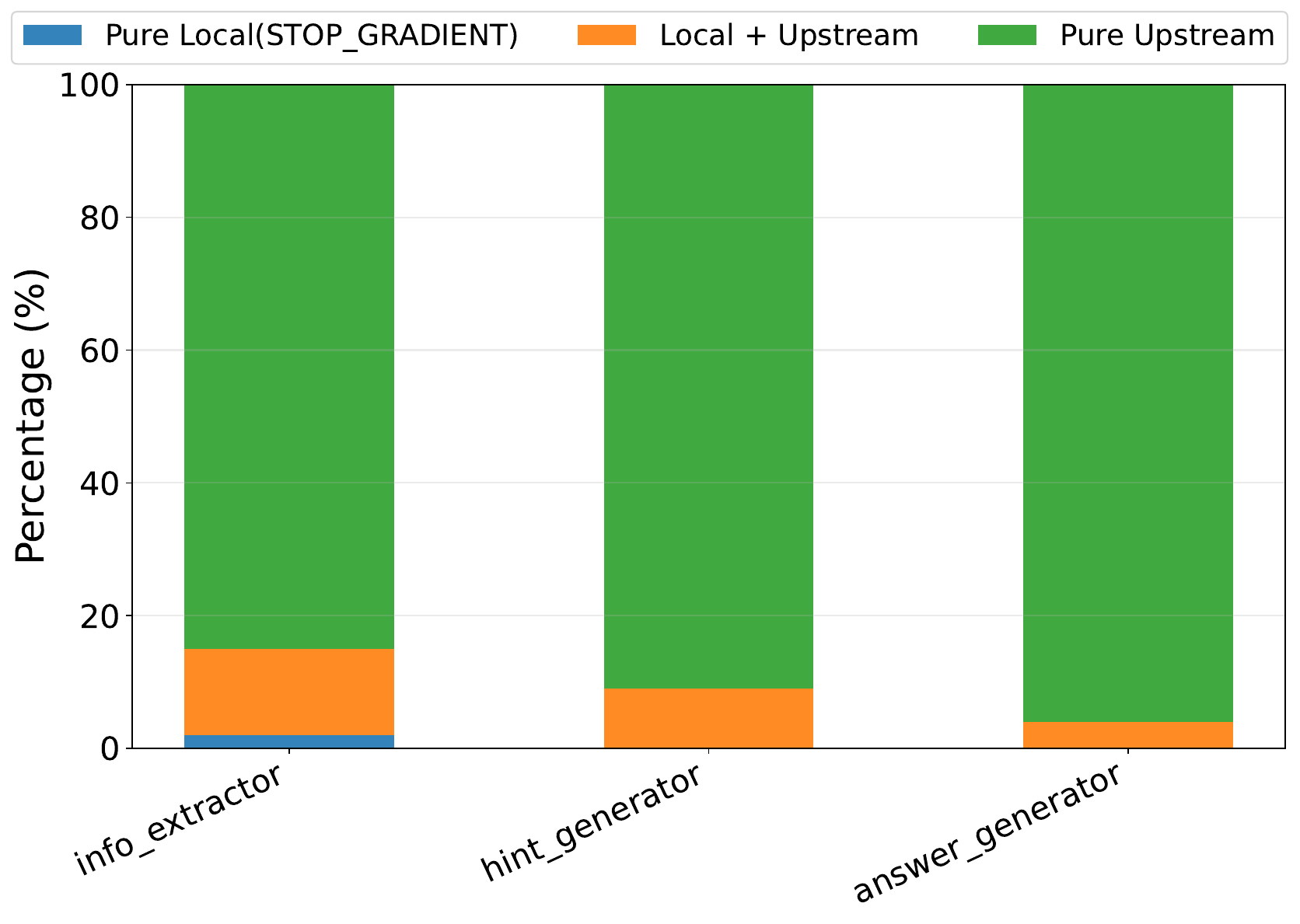}
    \caption{\textbf{Attribution Accuracy under Batch Shuffling.} We focus on the three downstream learnable components (InfoExtractor, HintGenerator, AnswerGenerator). We measure how often the optimizer correctly attributes the error to the shuffled input (Upstream) versus attempting to fix the prompt (Local).}
    \label{fig:adversarial}
    \vspace{-5mm}
\end{figure}

\subsection{System Dynamics: The Phase Transition of Error Routing}
\label{ssec:dynamics}

Deep Delta Learning theory proposes that an optimal system should dynamically adjust its signal routing based on the training stage. To verify this, we track the attribution of the Semantic Projector throughout the training process on HotpotQA. Specifically, we calculate the proportion of feedback signals projected into the {Local Subspace} ($\mathcal{S}_{\text{local}}$) versus the {Upstream Subspace} ($\mathcal{S}_{\text{upstream}}$) at each epoch.

As shown in Figure~\ref{fig:dynamic_attribution}, we observe a distinct \textbf{Phase Transition} in error routing:

\noindent\textbf{Early Stage (0-10 Steps):} The system is dominated by upstream signal propagation. Downstream nodes (e.g., {AnswerGenerator}) correctly identify that their failures stem from poor upstream context rather than internal logic. By routing gradients primarily to $\mathcal{S}_{\text{upstream}}$, the system effectively resolves \textbf{Signal Blockage}, ensuring the root cause is addressed. Simultaneously, by suppressing local updates, it prevents \textbf{Downstream Over-correction}, protecting the reasoning module from overfitting to hallucinated context.

\noindent\textbf{Late Stage (Last 10 Steps):} As upstream modules stabilize, the error distribution shifts downstream. The {AnswerGenerator} begins to resolve reasoning issues locally. The Projector shifts its focus to $\mathcal{S}_{\text{local}}$, engaging Causal Routing to polish the prompt. Crucially, this prevents \textbf{Upstream Pollution}: reasoning errors are contained locally and are less likely to leak backward to destabilize the converged upstream components.

This dynamic shift confirms that \textsc{TextResNet} implicitly learns a curriculum: structurally prioritizing the upstream components before optimizing the downstream components.

\subsection{Robustness Check: Causal Attribution under Batch Shuffling}
\label{ssec:robustness}

A critical property of \textsc{TextResNet} is the ability to correctly distinguish between local errors and upstream input noise. To rigorously test this, we designed a counterfactual \textbf{Batch Shuffling Intervention} on HotpotQA.

During the forward pass, we intentionally shuffled the outputs of the first node ({QuestionRewriter}) before passing them to the second node ({InfoExtractor}). This creates a specific mismatch: the input to the {InfoExtractor} is valid text but semantically irrelevant to the ground truth, simulating a severe upstream hallucination. We then measured the attribution accuracy of all downstream components during the backward process.

As reported in Fig.~\ref{fig:adversarial}, \textsc{TextResNet} correctly identifies the mismatch as an {Upstream} error in 96\% of cases on the last node (AnswerGenerator). By routing the signal purely to $\mathcal{S}_{\text{upstream}}$ and blocking local updates, our architecture successfully decouples the optimization process from input-dependent noise, simultaneously resolving \textbf{Signal Blockage} by notifying the upstream node and preventing \textbf{Downstream Over-correction} by freezing the local prompt.



\begin{figure}[t]
    \centering
    \includegraphics[width=0.9\linewidth]{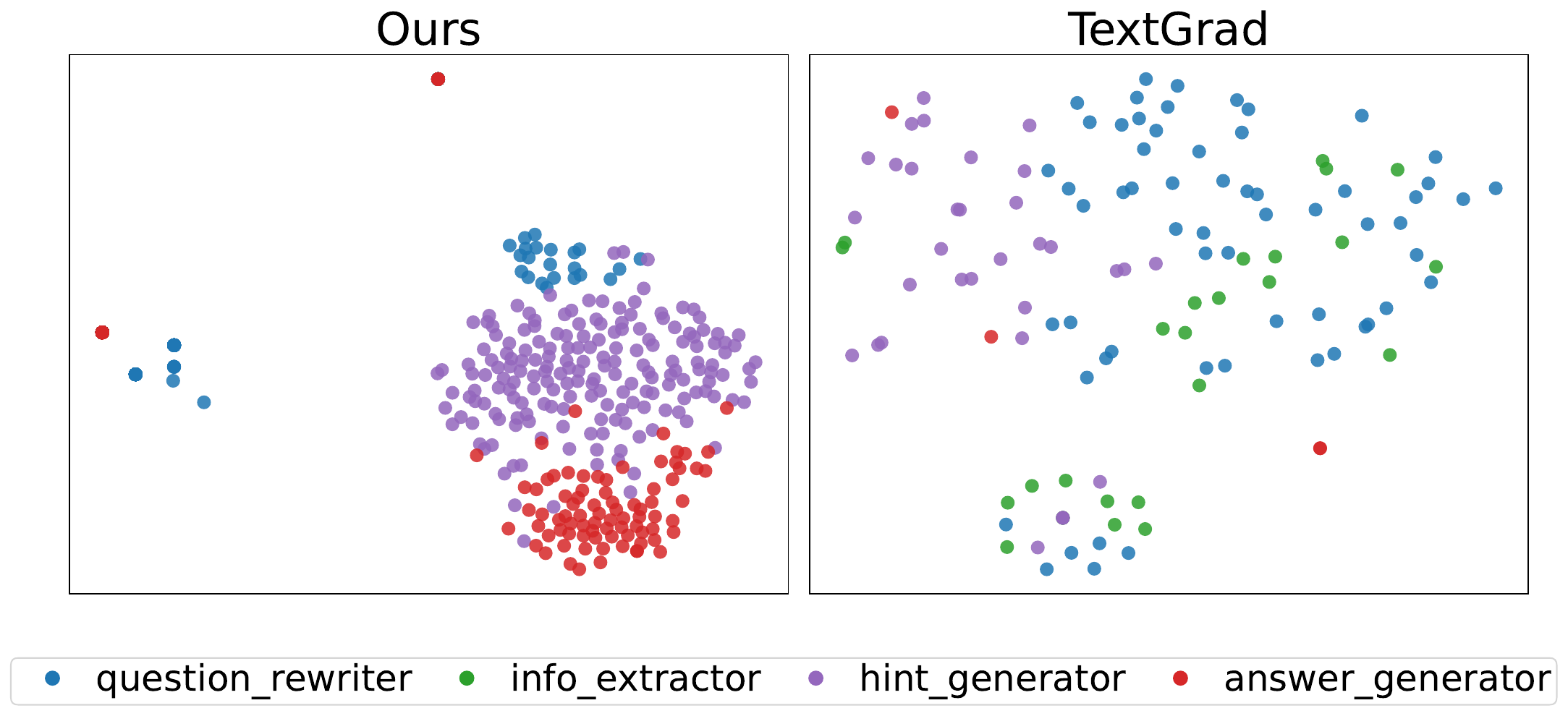}
    \caption{\textbf{Semantic Distribution of Gradients.} \textsc{TextResNet} (Left) produces structurally distinct feedback clusters, whereas TextGrad (Right) generates entangled, overlapping critiques.}
    \label{fig:tsne}
    \vspace{-5mm}
\end{figure}

\subsection{Semantic Coherence via t-SNE}
\label{ssec:gradient_quality}
To visually confirm that our Semantic Gradient Decomposition effectively disentangles feedback, we collected the textual gradients generated for all components during training and projected their Sentence-BERT embeddings into a 2D space using t-SNE as illustrated in Figure~\ref{fig:tsne}.

The \textsc{TextResNet} distribution (Left) demonstrates that feedback samples form distinct, well-separated clusters corresponding to their source components. This implies that the optimizer receives clear, component-specific instructions (e.g., ``Fix retrieval keywords'' vs. ``Fix reasoning logic''). Conversely, the \textbf{TextGrad} distribution (Right) suffers from Semantic Entanglement where the clusters overlap significantly, indicating that upstream components are flooded with entangled feedback mixed with downstream failures.

\section{Conclusion}

We presented \textsc{TextResNet}, a unified framework that resolves the critical optimization bottlenecks in Compound AI Systems. 
We identify Semantic Entanglement as a root cause of this bottleneck, where Attribution Ambiguity leads to three recurring failure modes.
By mapping the geometric principles of residual learning to the semantic space, our approach includes {Additive Semantic Deltas}, {Semantic Projector}, {Causal Routing} and {Density-Aware Scheduling} to precisely disentangle and propagate feedback. Empirical results demonstrate that \textsc{TextResNet} not only achieves superior performance but also ensures stability in deep chains where prior methods fail.

\section*{Acknowledgment}
Suizhi Huang is supported by Jinan-NTU Green Technology Research Institute (GreenTRI). Han Yu is supported by the Ministry of Education, Singapore, under its Academic Research Fund Tier 1 (RG101/24). Xiaoxiao Li is supported by the NSERC Discovery Grant RGPIN-2022-05316, NSERC Alliance Grant ALLRP 602633-24, Tri-Agency Canada; Canada CIFAR AI Chair Awards, and Canada Research Chair Fellowship; IITP grant, the Ministry of Science and ICT (RS-2024-00445087, RS- 2025-25464461), funded by the Korea government (MSIT).

\section*{Impact Statement}

This paper presents \textsc{TextResNet}, a framework that improves the stability and optimization of Compound AI Systems. Our goal is to advance the field of Machine Learning by fixing common issues in multi-step workflows, such as unclear error tracking and signal blockage.

Our work directly improves system reliability. By correctly identifying where errors come from (via {Causal Routing}), our method prevents downstream agents from ``hallucinating'' fixes for problems caused by earlier steps. This stops errors from spreading and makes the system more robust, which is a key safety factor when deploying LLMs in real-world tasks.

Additionally, our experiments (Appendix F) show that \textsc{TextResNet} is efficient, requiring significantly fewer tokens to reach convergence compared to existing methods. This reduces the computational cost and energy usage associated with developing and fine-tuning complex AI systems.

Finally, while automated optimization can carry risks if applied to harmful objectives, our framework operates using readable textual feedback. This transparency allows developers to easily inspect why a prompt was changed, ensuring better human oversight compared to black-box optimization methods.

\bibliography{example_paper}
\bibliographystyle{icml2026}

\newpage
\appendix
\onecolumn

\newpage
\appendix
\onecolumn

\section{Detailed Related Work}
\label{app:relatedwork}

Our work is situated at the intersection of Compound AI Systems, Automated Prompt Optimization, and architectural inductive biases for signal propagation. We review these areas below, identifying the specific gap in achieving precise signal routing to prevent the problematic error modes identified in Section \ref{sec:intro}.

\noindent{\textbf{Compound AI Systems and Optimization Challenges.}}
The development of AI has evolved from monolithic Large Language Models (LLMs) to Compound AI Systems (CAS) to solve complex, long-horizon tasks ~\citep{lu2024fedhca2, orchestrate, li2025became}. Frameworks like LangChain and DSPy ~\citep{dsp} formalize these systems as modular computation graphs. While these frameworks facilitate the construction of sophisticated workflows ~\citep{aflow, admn}, optimizing them presents a fundamental challenge: \textit{Attribution Ambiguity} ~\citep{multiagent_fail, faulty_multiagent_resilience}. In a deep chain, a failure in the final output is often a compounding result of upstream retrieval errors and downstream reasoning flaws ~\citep{interm_failures}. Existing optimization solutions, such as manual prompt engineering or brute-force search ~\citep{zhou2022large, llm_as_opt, search, evoprompt}, lack the structural awareness to assign credit accurately. They typically treat the system as a black box, scaling poorly as system complexity increases and often failing to identify the root cause of failure ~\citep{a-bad-bo, frugalgpt}.

\noindent{\textbf{Textual Differentiation.}}
Gradient-based prompt optimization represents a paradigm shift. Recently, \textbf{TextGrad} ~\citep{textgrad} pioneered ``differentiation via text," treating the CAS as a differentiable graph where textual feedback propagates backward to update prompts. While effective for shallow networks, TextGrad treats the backward pass as a flat, unconstrained conversation, and the forward pass as a lossy rewriting process ~\citep{reflexion, self_refine, dylan}. We argue that this approach suffers from Semantic Entanglement, directly leading to the three problems defined in our Introduction. Specifically, the lack of an explicit identity path causes \textbf{Signal Blockage} (upstream components receive diluted feedback), while the inability to separate error sources leads to \textbf{Downstream Over-correction} (optimizing for noise) and \textbf{Upstream Pollution} (propagating downstream errors).

To address optimization in compound systems, OPTIMAS ~\citep{optimas} proposes a decentralized approach by decomposing the global objective. It trains specific Local Reward Functions (LRFs) for each component to align local optimization with global performance, preventing the problem caused by flat conversational optimizing strategy ~\citep{mmoa_rag, dcir}. While effective, these methods represent a learning-based solution, requiring the collection of preference data and the training of auxiliary reward models ~\citep{adas, armap}. In contrast, our \textsc{TextResNet} proposes an architectural solution. By enforcing architectural constraints rather than learning auxiliary models, we achieve precise signal routing without the overhead of training reward functions.

\noindent{\textbf{Residual Learning and Geometric Constraints.}}
In deep learning, the challenge of training deep networks was addressed by ResNet, which introduced identity skip connections ($y = x + f(x)$) to create a ``gradient highway" ~\citep{he2016deep}. Recent theoretical advancements, such as Deep Delta Learning (DDL) ~\citep{zhang2025deepdelta} and Manifold-Constrained Hyper-Connections (mHC) ~\citep{xie2025mhc}, have further formalized residual learning as a mechanism for controlling state transitions. DDL demonstrates that dynamic gating between identity mapping and semantic projection is key to stable signal propagation ~\citep{zhang2025deepdelta}, while mHC emphasizes that constraining updates to a specific manifold prevents signal degradation across depths ~\citep{xie2025mhc, liu2019darts, chen2023parameter}.

Our work is the first to rigorously map these geometric principles to the discrete textual optimization space to resolve semantic entanglement. We redefine the forward pass of an agent not merely as a ``rewrite" operation, but as a ``semantic residual" (Identity + Delta). This structural inductive bias allows us to implement Semantic Gradient Decomposition during the backward process. By semantically disentangling ``local defects" from ``upstream context", we bring the stability and depth-scalability of residual networks to Compound AI Systems, ensuring that optimization signals are routed precisely to their causal components.

\section{Detailed Experimental Setup}
\label{app:implementation}
This section  provides complete experimental and implementation details for \textsc{TextResNet}, including datasets, system pipelines, baselines, optimization hyperparameters, and \emph{full prompt templates} for all LLM-driven components.

\subsection{Datasets}
\label{app:datasets}

We evaluate on four tasks spanning multi-hop QA, code generation with execution-based verification, biomedical yes/no QA, and knowledge-base entity retrieval with heterogeneous evidence.

\subsubsection{HotpotQA (multi-hop QA)}
\label{app:dataset_hotpotqa}

\noindent\textbf{Data source.}
We use the standard HotpotQA~\citep{hotpotqa} question--answer pairs. Each example contains:
\begin{itemize}
    \item \textbf{Input}: \texttt{question}
    \item \textbf{Ground truth}: \texttt{gd\_answer}
\end{itemize}

\noindent\textbf{Splits.}
We use a predefined train/dev split for training and validation, and evaluate on a distractor-style test set when available; otherwise we fall back to the default test split.

\noindent\textbf{Metric.}
We report answer F1 between predicted short answer and gold answer.

\subsubsection{BigCodeBench (code generation + unit tests)}
\label{app:dataset_bigcodebench}

\noindent\textbf{Data source.}
We use the HuggingFace dataset \texttt{bigcode/bigcodebench} (version \texttt{v0.1.3})~\citep{bigcodebench}. Each example provides:
\begin{itemize}
    \item \textbf{Instruction prompt} (used as \texttt{question})
    \item \textbf{Unit tests} (used as \texttt{unit\_tests})
    \item \textbf{Task ID} and a fixed \textbf{entry point} (\texttt{task\_func})
\end{itemize}

\noindent\textbf{Splits.}
We construct a randomized split mask with a fixed seed: a training pool of size \(\texttt{train\_size}\) (default 1000) and the remaining examples as the test pool; the training pool is further split 95/5 into train/validation.

\noindent\textbf{Metric.}
We report pass@1 as a binary score from sandboxed execution: 1.0 iff the final generated code passes the provided unit tests.

\subsubsection{PubMedQA (biomedical yes/no/maybe QA)}
\label{app:dataset_pubmedqa}

\noindent\textbf{Data source.}
We use a PubMedQA-style dataset stored as JSONL (train and test)~\citep{pubmedqa}. Each example contains:
\begin{itemize}
    \item \textbf{Input}: \texttt{question} and \texttt{context} (a paragraph or a list of sentences concatenated into one string).
    \item \textbf{Ground truth}: \texttt{groundtruth} in \{\texttt{yes}, \texttt{no}, \texttt{maybe}\}.
\end{itemize}

\noindent\textbf{Splits.}
We form train/validation by a 95/5 split of the training JSONL (after optional shuffling/subsampling), and use the provided test JSONL as test.

\noindent\textbf{Metric.}
We report exact-match accuracy on the normalized label (\texttt{yes}/\texttt{no}/\texttt{maybe}). The model output is post-processed by extracting the first occurrence of one of these labels.

\subsubsection{STaRK (entity retrieval with relational + textual evidence)}
\label{app:dataset_stark}

\noindent\textbf{Data source.}
We use the STaRK dataset~\citep{stark} from HuggingFace (\texttt{snap-stanford/stark}), Prime subset. Each query comes with gold answer entity IDs.

\noindent\textbf{Candidate construction and evidence.}
For each query, we retrieve candidate entities using dense similarity (precomputed embeddings). We then form a 5-candidate set containing (i) at least one gold answer when possible and (ii) distractor candidates. For each candidate we provide:
\begin{itemize}
    \item \textbf{Relational evidence}: structured relation strings (truncated to 1024 characters).
    \item \textbf{Textual evidence}: property/summary strings (truncated to 1024 characters).
    \item \textbf{Embedding similarity scores}: a length-5 list (one per candidate).
\end{itemize}

\noindent\textbf{Subset sizes.}
To control cost, we evaluate on fixed-size subsets (Prime): train 250, validation 25, test 100.

\noindent\textbf{Metric.}
We report MRR computed from the final ranked list of the 5 candidates.

\subsection{Systems (Pipelines)}
\label{app:systems_named}

We evaluate on four systems. For clarity, we describe each pipeline using the exact component names used in the system graphs. All these systems are exactly the implementation from OPTIMAS where a more detailed description could be found~\citep{optimas}.

\subsubsection{HotpotQA system}
\label{app:sys_hotpotqa_named}
\noindent\textbf{Components:}
\texttt{question\_rewriter} \(\rightarrow\)
\texttt{info\_extractor} \(\rightarrow\)
\texttt{wikipedia\_retriever} \(\rightarrow\)
\texttt{hint\_generator} \(\rightarrow\)
\texttt{answer\_generator}.
\par
\noindent\textbf{Dataflow:}
\texttt{question\_rewriter} produces \texttt{rewritten\_query};
\texttt{info\_extractor} produces \texttt{search\_keywords};
\texttt{wikipedia\_retriever} returns \texttt{retrieve\_content};
\texttt{hint\_generator} produces \texttt{hints};
\texttt{answer\_generator} outputs the final \texttt{answer}.

\subsubsection{BigCodeBench system}
\label{app:sys_bigcodebench_named}
\noindent\textbf{Components:}
\texttt{code\_generator} \(\rightarrow\)
\texttt{unit\_test\_generator} \(\rightarrow\)
\texttt{executor} \(\rightarrow\)
\texttt{final\_code\_generator}.
\par
\noindent\textbf{Dataflow:}
\texttt{code\_generator} produces \texttt{initial\_code};
\texttt{unit\_test\_generator} produces \texttt{additional\_unit\_tests};
\texttt{executor} returns \texttt{execution\_result};
\texttt{final\_code\_generator} outputs final \texttt{code}.

\subsubsection{PubMedQA system}
\label{app:sys_pubmed_named}
\noindent\textbf{Components:}
\texttt{context\_model\_selector} \(\rightarrow\)
\texttt{context\_analyst} \(\rightarrow\)
\texttt{solver\_model\_selector} \(\rightarrow\)
\texttt{problem\_solver}.
\par
\noindent\textbf{Dataflow:}
\texttt{context\_analyst} produces \texttt{summary} from \texttt{context}+\texttt{question};
\texttt{problem\_solver} outputs the final \texttt{answer} in \{\texttt{yes,no,maybe}\}.
(Selectors output a model choice used by the subsequent component.)

\subsubsection{STaRK system}
\label{app:sys_stark_named}
\noindent\textbf{Components:}
\texttt{relation\_scorer}, \texttt{text\_scorer}, \texttt{final\_scorer}.
\par
\noindent\textbf{Dataflow:}
Given \texttt{question} and five candidates with \texttt{relation\_info}, \texttt{text\_info}, and \texttt{emb\_scores}:
\texttt{relation\_scorer} outputs \texttt{relation\_scores};
\texttt{text\_scorer} outputs \texttt{text\_scores};
\texttt{final\_scorer} aggregates them into \texttt{final\_scores} used for ranking.

\subsection{Baselines}
\label{app:baselines}

We follow the baseline suite and reproduction protocol used in OPTIMAS for the same set of compound-system benchmarks (PubMedQA, STaRK-Prime, HotpotQA, BigCodeBench). All baselines share the \emph{same system graphs} (same component boundaries and I/O interfaces) and differ only in how they select/update the optimizable component configurations (primarily prompts).

\paragraph{Unoptimized / Chain-of-Thought (CoT).}
This baseline uses default component settings with \emph{no learning and no prompt updates}. For reasoning-heavy components, we use standard step-by-step (CoT-style) prompting as the unoptimized reference, matching the conventional ``no optimization'' setup.

\paragraph{HBC.}
HBC is a hierarchical imitation learning baseline that aligns sub-module outputs to trajectories with high global reward.
We implement HBC on the collected preference dataset by replacing the original reward model with an embedding-similarity reward:
for each input, we embed the module output and the preferred output using \texttt{text-embedding-3-small}, compute cosine similarity, and weight the similarity by the score gap between the preferred and rejected outputs to form the learning signal.

\paragraph{TextGrad.}
TextGrad performs \emph{global textual backpropagation} over the full system computation graph, propagating feedback hop-by-hop without explicit length constraints.
Following OPTIMAS reproduction settings, we use GPT-4o-mini to optimize each component prompt independently in separate epochs, evaluate on a fixed held-out validation set every two optimization steps, and select the best-performing prompt.
We use a batch size of 4 across components and datasets.

\paragraph{TextGrad + Summarization (TextGrad w/ Sum).}
This baseline applies summarization prompting to \emph{compress stacked textual gradients} before they are propagated further upstream, thereby controlling feedback length.
We cap the feedback to a small fixed budget (e.g., 100 tokens) while keeping other TextGrad settings unchanged.
This baseline tests whether naive compression can resolve exploding-text issues without harming gradient utility.

\paragraph{DSPy (MIPRO).}
DSPy is a modular programmatic framework that learns and compiles prompts.
We use DSPy’s MIPRO optimizer, but for fair comparison, we disable few-shot demonstration selection and system-prompt optimization, and optimize only the user instruction text.
We set the number of MIPRO iterations dynamically to match the controlled budget of system runs used for other baselines.

\subsection{Hyperparameters and Full Configuration}
\label{app:full_hparams}

This section enumerates all implementation-critical hyperparameters for reproducibility, including (i) optimization-loop parameters, (ii) forward-pass inference settings, (iii) backward (Semantic Projector) settings, (iv) execution / tool settings, and (v) the prompt-optimizer settings.

\begin{table}[t]
\centering
\small
\caption{\textsc{TextResNet} optimization-loop hyperparameters.}
\begin{tabular}{l l}
\hline
\textbf{Category} & \textbf{Setting} \\
\hline
Total budget \(T\) & 100 optimization steps \\
Training examples per step \(K\) & 8 \\
Dev evaluation frequency (\texttt{update\_freq}) & every 1 step \\
Repeated dev trials (\texttt{eval\_time}) & 3 \\
Test repeats & 3 \\
Max evaluation concurrency & 20 \\
Random seed & 42 \\
\hline
\end{tabular}
\label{tab:resgrad_loop_hparams_full}
\end{table}

We distinguish three LLM roles: forward-pass agents, the backward Semantic Projector, and the PromptOptimizer. We report model name, decoding temperature, and maximum generation length for each role.

\paragraph{Forward-pass agents.}
Forward-pass components use deterministic decoding by default (temperature 0) and task-dependent \texttt{max\_new\_tokens}.

\paragraph{Backward Semantic Projector.}
The backward model is invoked with a non-zero temperature (0.4) to improve robustness of causal attribution while remaining stable under output constraints.

\paragraph{PromptOptimizer.}
The prompt optimizer uses a separate decoding configuration with slightly higher temperature (0.7) to explore candidate prompt edits.

\begin{table}[t]
\centering
\small
\caption{Forward-pass LLM settings (model, temperature, and max generation length).}
\begin{tabular}{l l l l}
\hline
\textbf{System} & \textbf{Component} & \textbf{Model} & \textbf{Temp / Max new tokens} \\
\hline
HotpotQA & question\_rewriter & openai/gpt-4o-mini & 0.0 / 1024 \\
HotpotQA & info\_extractor & openai/gpt-4o-mini & 0.0 / 1024 \\
HotpotQA & hint\_generator & openai/gpt-4o-mini & 0.0 / 512 \\
HotpotQA & answer\_generator & openai/gpt-4o-mini & 0.0 / 256 \\
\hline
BigCodeBench & code\_generator & anthropic/claude-3-haiku & 0.0 / 2048 \\
BigCodeBench & unit\_test\_generator & anthropic/claude-3-haiku & 0.0 / 1024 \\
BigCodeBench & final\_code\_generator & anthropic/claude-3-haiku & 0.0 / 2048 \\
\hline
PubMedQA & context\_analyst & (selected from candidates; default gpt-4o-mini) & 0.0 / 4096 \\
PubMedQA & problem\_solver & (selected from candidates; default gpt-4o-mini) & 0.0 / 4096 \\
\hline
STaRK & relation\_scorer & anthropic/claude-3-haiku-20240307 & 0.6 / 256 \\
STaRK & text\_scorer & anthropic/claude-3-haiku-20240307 & 0.6 / 256 \\
\hline
\end{tabular}
\label{tab:forward_llm_hparams}
\end{table}

\begin{table}[t]
\centering
\small
\caption{Backward Semantic Projector settings.}
\begin{tabular}{l l}
\hline
\textbf{Parameter} & \textbf{Setting} \\
\hline
Model & openai/gpt-4o-mini \\
Temperature & 0.4 \\
\hline
\end{tabular}

\label{tab:backward_llm_hparams}
\end{table}

\begin{table}[t]
\centering
\small
\caption{PromptOptimizer settings.}
\begin{tabular}{l l}
\hline
\textbf{Parameter} & \textbf{Setting} \\
\hline
Optimizer model & openai/gpt-4o-mini \\
Temperature & 0.7 \\
Max new tokens & 512 \\
\hline
\end{tabular}

\label{tab:prompt_optimizer_hparams}
\end{table}

\paragraph{BigCodeBench sandboxed execution.}
We evaluate candidate code with sandboxed execution and fixed resource limits:
\begin{itemize}
    \item max\_as\_limit \(=300\)KB, max\_data\_limit \(=300\)KB, max\_stack\_limit \(=300\)KB;
    \item min\_time\_limit \(=2\)s, gt\_time\_limit \(=5\)s.
\end{itemize}
Trace and stderr are truncated (each to 3000 characters) before being passed to downstream components.

\paragraph{HotpotQA Wikipedia retrieval.}
We retrieve a bounded number of snippets with top-\(k\) set to 20.

\paragraph{STaRK evidence truncation.}
Per-candidate relational evidence and textual evidence are each truncated to 1024 characters.

\subsection{Forward Operator: Key-wise Context Accumulation with Per-Component Prompt Variables}
\label{app:forward_structured_residual}

In the current implementation, the forward pass maintains a shared key--value \emph{context dictionary} that is updated after each component call.
At depth $\ell$, a component receives only the fields it declares in \texttt{input\_fields} (a projection $\Pi_\ell$ of the current context) and returns an output dictionary which is merged into the context by key-wise update.

\noindent\textbf{Per-component prompt variables.}
Each optimizable component contains a prompt text variable (\texttt{variable}) that is an optimization target and changes throughout training.
In forward execution, the component constructs a lightweight prompt by concatenating \texttt{variable} with its current inputs (e.g., ``Query'', ``Evidence'', ``Hints''), and post-processes the model output (e.g., stripping whitespace or code fences).
We therefore omit fixed forward prompt templates in the supplement, since the \texttt{variable} content evolves during training and the invariant part is the \emph{interface} (input/output fields) rather than handcrafted wording.

\noindent\textbf{Connection to backward attribution.}
In the backward pass, we replay each component call using the logged trajectory (\texttt{input}/\texttt{output}) and instruct the analyzer to treat the component output as the incremental contribution at that step.
This aligns forward execution (context accumulation + narrow interfaces) with backward responsibility decomposition (LOCAL vs UPSTREAM) and STOP\_GRADIENT pruning.

\subsection{Detailed Context-Window Control}
\label{app:context_control_detailed}

Our current codebase controls context growth primarily through \emph{(a) interface-level projection} and \emph{(b) producer-side hard caps}.

\noindent\textbf{Interface-level projection.}
The system maintains a shared dictionary-like context $h$ during forward execution, but each component is invoked only with the fields listed in its \texttt{input\_fields}.
Thus, $\Pi_\ell(h)$ is implemented as a key-selection step: components do not automatically receive the entire history.

\noindent\textbf{Producer-side caps.}
We bound the main sources of long context at the point of generation:
(i) retrieval returns only top-$k$ snippets which are then concatenated into \texttt{retrieve\_content};
(ii) execution diagnostics (trace/stderr) are truncated to fixed maximum lengths before being consumed by downstream repair.
These caps provide a simple and robust window-control mechanism without requiring a global prompt serializer.

\noindent\textbf{Trajectory logging.}
For analysis and backward attribution, we log each component's forward \texttt{input} and \texttt{output} into a trajectory object (\texttt{traj}), enabling post-hoc inspection of what each component saw and produced.

\section{Prompts and System Instructions}
\label{app:prompts}

This appendix provides the full prompt templates used in TextResNet. We categorize them into three components: (1) The Backward Semantic Projector for failure attribution, (2) The Prompt Optimizer for gradient-based updates, and (3) The auxiliary formatting constraints.

\definecolor{codegray}{rgb}{0.95,0.95,0.95}
\lstset{
    backgroundcolor=\color{codegray},
    basicstyle=\ttfamily\footnotesize,
    breakatwhitespace=false,
    breaklines=true,
    captionpos=b,
    keepspaces=true,
    showspaces=false,
    showstringspaces=false,
    showtabs=false,
    tabsize=2,
    frame=single,
    framerule=0pt,
    columns=fullflexible
}

\subsection{Backward Semantic Projector (Attribution)}
\label{app:backward_prompt}

The \textit{Backward Semantic Projector} acts as a "Root Cause Analysis Agent." It analyzes the execution trace to distinguish between errors caused by the current component (\textbf{LOCAL}) and errors propagating from previous steps (\textbf{UPSTREAM}). 

\begin{lstlisting}[caption={Backward System Prompt (Universal Causal Attribution)}]
You are an expert failure analyst for a multi-step AI system.
You will analyze a component's trace. Your goal is to pinpoint exactly WHY it failed.
CRITICAL ANALYSIS STEPS:
1. Did the component strictly follow its instructions? If no -> LOCAL fault.
2. Was the input physically insufficient to produce the desired output?  -> UPSTREAM fault.
3. AVOID "HINDSIGHT BIAS": Do not blame the component for not knowing facts that were not provided in the context.
4. Distinguish between "Format Error", "Hallucination", and "Missing Info".
Output strictly in the requested format.
\end{lstlisting}
\vspace{-4mm}

\subsection{Prompt Optimizer}
\label{app:optimizer_prompt}

The optimizer uses a meta-prompt to refine the component's system prompt based on batched feedback. This template is designed to prevent overfitting to specific examples by enforcing general rule extraction and negative constraint hardening.

\begin{lstlisting}[caption={Prompt Update Template (Batch Optimization)}]
You are an expert Prompt Engineer for a Compound AI System.
You will optimize the system prompt for a specific component based on a batch of failure feedback from multiple execution traces.

The feedback contains:
1. Context: What the component saw.
2. Local Fix: What the component *should* have done.
3. Upstream Feedback: Ignore this (it belongs to other components).

Your goal: Rewrite the <VARIABLE> prompt to make the component more robust, strictly compliant, and immune to the observed failure modes across ALL future inputs.

Component Role:
<ROLE>{variable_desc}</ROLE>

Current Prompt:
<VARIABLE>{variable_short}</VARIABLE>

Batch Feedback:
<BATCH_FEEDBACK>
{variable_context}
</BATCH_FEEDBACK>

ANALYSIS & OPTIMIZATION STRATEGY:
1. **Identify High-Level Patterns**: Look across the batch. Are failures due to format violations, logic edge cases, hallucination, or over-truncation?
2. **Avoid Overfitting**: DO NOT mention specific entities, code variables, or exact answers from the feedback. Generalize the fix (e.g., instead of "Handle variable x", write "Handle uninitialized variables").
3. **Harden Constraints (Negative Prompting)**: If the model hallucinated or used placeholders, add explicit negative constraints (e.g., "NEVER make up facts", "DO NOT output markdown fences").
4. **Structural Reinforcement**: Place the most critical rules at the end of the prompt (the recency effect for LLMs). Use ALL CAPS for critical constraints.
5. **Maintain Residual Identity**: The component must only output the required increment (Delta) as defined in its role. Do not expand its scope.

OUTPUT INSTRUCTIONS:
- You must output the fully rewritten prompt between the {start_tag} and {end_tag} tags.
- The new prompt must be self-contained, clear, and stricter than the original.

{start_tag}{{new_prompt}}{end_tag}
\end{lstlisting}

\subsection{Auxiliary Formatting Constraints}
\label{app:aux_prompts}

To ensure the stability of the automated pipeline, we employ strict output constraints for the backward pass and a standardized context construction template.

\textbf{Optimizer System Message.} This message defines the persona of the meta-optimizer.
\begin{lstlisting}
You are part of an optimization system that improves text (the prompt).
You will receive feedback and context, and use them to improve the prompt.
The feedback may be noisy; identify what is important and correct.
Pay attention to the role description and the context where the prompt is used.
This is very important: You MUST return the improved prompt only between the required tags.
\end{lstlisting}

\textbf{Backward Causal Routing.} This template enforces the parsing format for the feedback router.
\begin{lstlisting}
Output exactly TWO sections, in this exact order:

LOCAL:
(feedback for improving THIS component prompt only; or leave empty)

UPSTREAM:
(STOP_GRADIENT OR feedback for upstream components)

Rules:
- If the failure is mainly caused by THIS component's output (<LM_OUTPUT>), write LOCAL and set UPSTREAM to STOP_GRADIENT.
- If the failure is mainly caused by UPSTREAM inputs (<LM_INPUT>), leave LOCAL empty and write UPSTREAM feedback.
- If UPSTREAM is STOP_GRADIENT, output ONLY the token STOP_GRADIENT (no punctuation, no extra words).
- Do NOT output anything outside these two sections.
\end{lstlisting}

\textbf{Context Construction.} The following template formats the input for the backward operator, presenting the component's trajectory relative to the objective function.
\begin{lstlisting}
You will give feedback to a prompt with the following role:
<ROLE>{variable_desc}</ROLE>

Here is a conversation with a language model:
<LM_SYSTEM_PROMPT>{system_prompt}</LM_SYSTEM_PROMPT>
<LM_INPUT>{lm_input}</LM_INPUT>
<LM_OUTPUT>{lm_output}</LM_OUTPUT>

This conversation is part of a larger system. The <LM_OUTPUT> was later used as {response_desc}.
Treat <LM_OUTPUT> as this component's incremental contribution at this step.
Attribute responsibility by asking: could changing <LM_OUTPUT> (given the same <LM_INPUT>) reasonably fix the objective?

<OBJECTIVE_FUNCTION>{objective_feedback}</OBJECTIVE_FUNCTION>

We are interested in giving feedback to the following span of text:
<VARIABLE>{variable_short}</VARIABLE>

Given the above history, route feedback into LOCAL vs UPSTREAM to improve the objective.
\end{lstlisting}

\section{Theoretical Analysis and Derivations}
\label{app:proofs}

\subsection{Derivation of Proposition 4.1 (Information Preservation)}
\label{proof:prop1}

\textbf{Assumption 4.1 (Independent Routing Policy):} To make the variance analysis tractable, we assume the Semantic Projector's routing decision is statistically independent of the magnitude of the noise realization, depending instead on the structural type of the error. While in practice the projector analyzes the specific textual content, this independence assumption serves as a necessary simplification to model the statistical behavior of the filter over a large distribution of inputs.
While specific instances exhibit dependence, over the distribution of all possible inputs and prompts, we model the routing policy's sensitivity as orthogonal to the noise magnitude, treating the filter as a stochastic gate parameterized solely by semantic categories, not signal amplitude.

\textbf{Theoretical Justification for exponential decay in standard agentic workflow:}
We model the standard agentic workflow as a Markov chain $h_0 \to h_1 \to \dots \to h_L$, where each transition $M_l$ is a stochastic operator (LLM inference). According to the \textbf{Strong Data Processing Inequality (SDPI)}~\citep{polyanskiy2017strong,cover1999elements,du2017strong}, for any non-trivial stochastic kernel with contraction coefficient $\eta_l < 1$, the mutual information with the input decays exponentially with depth~\citep{schoenholz2017deep}:
\begin{equation}
    I(h_L; h_0) \le \left( \prod_{l=1}^L \eta_l \right) I(h_1; h_0).
\end{equation}
In lossy rewriting (standard workflows), $\eta_l$ represents the information loss due to summarization or hallucination. In contrast, TextResNet enforces $h_l = h_{l-1} \oplus \Delta_l$. Since the operation $\oplus$ (e.g., concatenation) is structurally invertible regarding $h_{l-1}$, the contraction coefficient is effectively $\eta \approx 1$, preserving the causal path for gradient attribution.

\textbf{Proposition 4.1.} \textit{Let $\mathcal{H}$ be the semantic state space. In a deep chain of length $L$ employing additive semantic deltas, the upstream context $h_0$ remains theoretically accessible in the final state $h_L$. Unlike lossy rewriting chains where information decays exponentially with depth, our additive structure ensures that upstream context is available for causal attribution during the backward pass.}

\begin{proof}[Derivation]
We analyze the information flow by examining the \textit{invertibility} of the state transition function.

\textbf{Case 1: Standard Rewriting (Lossy).}
In standard agentic workflows, the state transition is defined as $h_l = \mathcal{M}(h_{l-1}; \theta_l)$, where $\mathcal{M}$ is a generative rewriting process. For the upstream context $h_{l-1}$ to be preserved in $h_l$, the mapping $\mathcal{M}$ must be bijective (invertible). However, LLM summarization or reasoning steps are typically compressive (many-to-one mappings), leading to information loss. Formally, if $\mathcal{M}$ is highly compressive (non-injective), the backward operator cannot uniquely reconstruct the input state $h_{l-1}$ from the gradient signal at $h_l$. In the context of textual optimization, this manifests as Semantic Gradient Vanishing: the optimizer lacks sufficient information to attribute the error to the upstream context $\theta_{l-1}$, effectively breaking the causal link required for effective credit assignment.

Formally, if $\mathcal{M}$ is not invertible, there exists no function $\mathcal{M}^{-1}$ such that $h_{l-1} = \mathcal{M}^{-1}(h_l)$. Thus, the gradient $\nabla_{h_{l-1}} \mathcal{L}$ cannot be accurately computed via the chain rule, as the dependency path is broken.

\textbf{Case 2: \textsc{TextResNet} (Preservative).}
In our framework, the state transition is defined as an additive operation:
\begin{equation}
    h_l = h_{l-1} \oplus \Delta_l, \quad \text{where } \Delta_l = \mathcal{F}(h_{l-1}; \theta_l).
\end{equation}
In the discrete textual domain, the operator $\oplus$ is instantiated as a structured concatenation or append operation. This operation is structurally invertible with respect to the input history. We can define a trivial projection operator $\Pi_{\text{history}}$ such that:
\begin{equation}
    \Pi_{\text{history}}(h_l) = h_{l-1}.
\end{equation}
Since $h_{l-1}$ can be perfectly reconstructed from $h_l$ (ignoring practical constraints like context window limits), the information loss is theoretically zero. This ensures that the dependency graph remains fully connected, allowing the Semantic Projector to attribute errors back to $h_0$ regardless of the chain depth $L$.
\end{proof}

\subsection{Derivation of Proposition 4.2 (Bounded Error Propagation)}
\label{proof:prop2}

\textbf{Proposition 4.2.} \textit{Let $\text{SNR}(g_l)$ be the ratio of actionable signal variance to non-actionable noise variance at depth $l$. \textbf{Assumption:} We model the error accumulation in the semantic space as a linear additive process with variance $\sigma^2$. Under this simplified model, if there exists a non-zero probability $p$ that a noise component is identified as local-specific and filtered out, the Causal Routing mechanism ensures that the noise variance in $\text{SNR}(g_l^{\text{TextResNet}})$ is bounded by a constant $\frac{1-p}{p}\sigma^2$ independent of depth $L$. In contrast, without routing, the noise variance grows linearly with $L$.}

\textbf{Remark on Linearity Assumption:} We acknowledge that textual error propagation is inherently non-linear. However, consistent with standard theoretical analyses of deep residual networks\citep{he2016deep, hardt2017identity}, we model the error dynamics using a first-order Taylor expansion in the semantic space. This linear approximation allows us to derive tractable bounds on variance reduction, providing theoretical intuition for the stability observed in our experiments.

\begin{proof}[Derivation]
We analyze the propagation of error variance under the simplified linear model. Let the total feedback signal at depth $l$ be $G_l = S_l + N_l$, where $S_l$ is the true gradient signal and $N_l$ is the accumulated noise. We focus on the variance of the noise component, $V_k = \text{Var}(N_k)$, where $k = L-l$ represents the distance from the output node.

\textbf{Model Assumption.} We assume that at each step, new noise $\delta_k$ is introduced with variance $\sigma^2$, and this noise is uncorrelated with previous noise terms.

\textbf{Case 1: Standard Backpropagation (Unbounded Growth).}
Without causal routing, the backward operator propagates all incoming feedback. The noise accumulates additively:
\begin{equation}
    N_{k} = N_{k-1} + \delta_k.
\end{equation}
Based on the independence assumption, the variance at step $k$ is:
\begin{equation}
    V_k^{\text{Std}} = V_{k-1} + \sigma^2 = \sum_{i=1}^k \sigma^2 = k \cdot \sigma^2.
\end{equation}
At the input layer ($k=L$), the noise variance is $L\sigma^2$. Since the signal $S_l$ typically does not grow with depth, the Signal-to-Noise Ratio (SNR) decays as $\mathcal{O}(1/L)$, leading to the \textit{Upstream Pollution} problem in deep chains.

\textbf{Case 2: \textsc{TextResNet} (Bounded Convergence).}
The Semantic Projector implements \textbf{Causal Routing}. At each step, with probability $p$, the projector identifies a noise component as purely local to the downstream node and blocks its propagation (Stop Gradient).
Let $Z_k \sim \text{Bernoulli}(p)$ be the indicator variable for this routing decision. The noise propagation is modeled as:
\begin{equation}
    N_k = (1 - Z_k) \cdot (N_{k-1} + \delta_k).
\end{equation}
We compute the expected variance $V_k = E[N_k^2]$ (assuming zero mean for simplicity):
\begin{align}
    V_k &= E[(1-Z_k)^2] \cdot E[(N_{k-1} + \delta_k)^2] \\
    &= (1-p) \cdot (V_{k-1} + \sigma^2).
\end{align}
This recurrence relation describes a geometric series. Expanding it yields:
\begin{align}
    V_k &= (1-p)\sigma^2 + (1-p)^2\sigma^2 + \dots + (1-p)^k\sigma^2 \\
    &= \sigma^2 \sum_{i=1}^k (1-p)^i.
\end{align}
As the chain depth increases ($k \to \infty$), this sum converges to:
\begin{equation}
    V_{\infty} = \sigma^2 \frac{1-p}{1 - (1-p)} = \sigma^2 \frac{1-p}{p}.
\end{equation}
\textbf{Conclusion.} The noise variance is bounded by the constant $\frac{1-p}{p}\sigma^2$, which is independent of the total depth $L$. This implies that even in infinitely deep chains, the noise level stabilizes, effectively preventing the divergence of error signals and resolving the \textit{Upstream Pollution} problem.
\end{proof}

\section{Algorithm: TextResNet Optimization}
\label{app:algorithm}

\begin{algorithm}[H]
\caption{\textsc{TextResNet}: Decoupling and Routing Optimization Signals}
\label{alg:textresnet}
\begin{algorithmic}[1]
\REQUIRE Graph $\mathcal{G}$ with $L$ nodes, Dataset $\mathcal{D}$, Reward $R$, Projector $\mathcal{P}$, Scheduler Temp $\tau$.
\STATE Initialize prompts $\Theta = \{\theta_1, \dots, \theta_L\}$ and gradient densities $\rho = \{0, \dots, 0\}$.
\FOR{step $t = 1 \to T$}
    \STATE Sample batch of inputs $X \sim \mathcal{D}$
    
    \STATE \textcolor{gray}{// Stage 1: Forward Pass (Additive Semantic Deltas)}
    \FOR{$l = 1 \to L$}
        \STATE Generate semantic delta: $\Delta_l \leftarrow \text{LLM}(h_{l-1}; \theta_l)$
        \STATE Accumulate state: $h_l \leftarrow h_{l-1} \oplus \Delta_l$ \COMMENT{Identity Highway}
    \ENDFOR
    \STATE Compute global reward and initial gradient: $g_L \leftarrow \text{Critique}(h_L, R)$
    
    \STATE \textcolor{gray}{// Stage 2: Backward Pass (Semantic Decomposition)}
    \FOR{$l = L \to 1$}
        \STATE Project: $\{g_{\text{local}}, g_{\text{upstream}}\} \leftarrow \mathcal{P}(g_l, \theta_l, h_{l-1})$
        
        \STATE \textcolor{gray}{// 2a. Handle Local Updates}
        \IF{$g_{\text{local}} \neq \emptyset$}
            \STATE Accumulate feedback: $\mathcal{K}_l \leftarrow \mathcal{K}_l \cup \{g_{\text{local}}\}$
            \STATE Update density: $\rho_l \leftarrow \rho_l + 1$
        \ENDIF
        
        \STATE \textcolor{gray}{// 2b. Causal Routing}
        \IF{$g_{\text{upstream}} \neq \emptyset$}
            \STATE Route upstream: $g_{l-1} \leftarrow g_{\text{upstream}}$ \COMMENT{Pass-through}
        \ELSE
            \STATE Stop propagation: $g_{l-1} \leftarrow \emptyset$ \COMMENT{Semantic Stop-Gradient}
        \ENDIF
    \ENDFOR
    
    \STATE \textcolor{gray}{// Stage 3: Density-Aware Optimization Scheduling}
    \IF{$t \pmod{\text{update\_freq}} == 0$}
        \STATE Sample node $k \sim \text{Boltzmann}(\rho, \tau)$
        \STATE Update prompt: $\theta_k \leftarrow \text{PromptOptimizer}(\theta_k, \mathcal{K}_k)$
        \STATE Reset density: $\rho_k \leftarrow 0$
        \STATE Clear feedback buffer: $\mathcal{K}_{k} \leftarrow \emptyset$
    \ENDIF
\ENDFOR
\end{algorithmic}
\end{algorithm}

\section{Additional Experimental Analysis}
\label{app:additional_experiments}

In this section, we provide supplementary experiments to further validate the scalability, robustness, and efficiency of \textsc{TextResNet}. We specifically analyze the impact of chain depth, the choice of the backward optimizer model, the effectiveness of our scheduling strategy, and the token efficiency of the optimization process.

\subsection{Scalability with Chain Depth}
\label{app:depth_scalability}

A core motivation of \textsc{TextResNet} is to resolve the \textit{Signal Blockage} and \textit{Upstream Pollution} problems that plague standard textual backpropagation in deep workflows. To empirically verify this, we constructed synthetic reasoning chains of varying lengths $L \in \{5, 10, 15, 20\}$ based on the HotpotQA dataset. In order not to break the basic reasoning logic of the system, we add ``Identity Node" which are asked to perform lossless re-write of the current output. We compared our method against TextGrad and its summarization variant.

As shown in Figure~\ref{fig:chain_depth}, \textsc{TextResNet} exhibits remarkable stability as the chain depth increases. While the performance of TextGrad decays rapidly due to the accumulation of semantic noise (Attribution Ambiguity) and vanishing gradients, \textsc{TextResNet} maintains a high F1 score even at $L=20$. This demonstrates that our \textit{Additive Semantic Deltas} and \textit{Identity Highway} effectively preserve the upstream context required for precise credit assignment, regardless of depth. Notably, the summarization baseline (TextGrad + Sum) performs worst in deep chains, suggesting that naive compression further destroys the semantic gradients required for optimization.

\begin{figure}[h]
    \centering
    \includegraphics[width=0.8\linewidth]{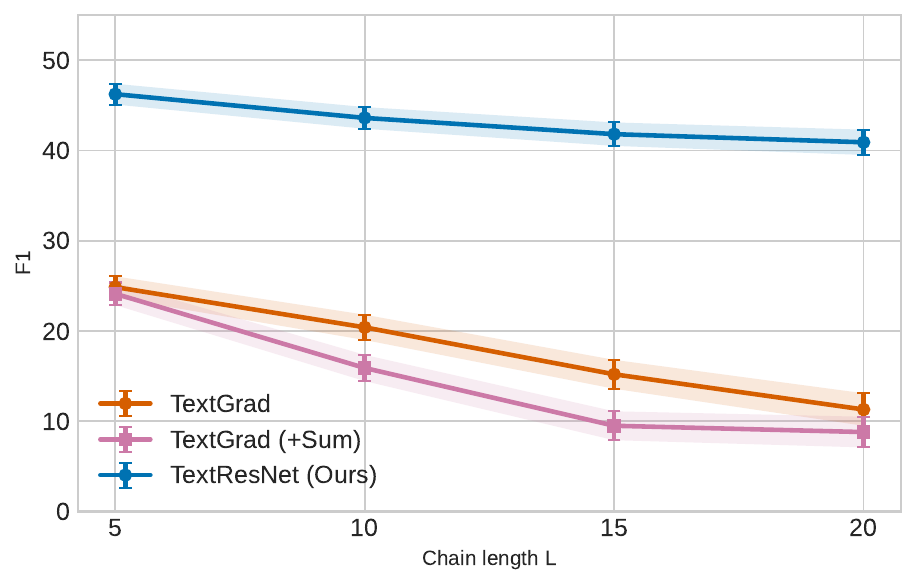}
    \caption{\textbf{Impact of Chain Depth on Optimization Performance.} We evaluate performance on HotpotQA with artificially added ``Identity Node". \textsc{TextResNet} (Blue) maintains performance stability as depth increases, whereas standard TextGrad (Orange) and its summarization variant (Pink) suffer from severe degradation due to semantic entanglement and signal vanishing.}
    \label{fig:chain_depth}
\end{figure}

\subsection{Robustness Across Backward Optimizer Models}
\label{app:model_robustness}

The \textit{Semantic Projector} in the backward pass serves as a symbolic differentiator. A key question is whether this architecture requires state-of-the-art LLMs to function, or if the structural inductive biases allow for the use of smaller models. We evaluated \textsc{TextResNet} on HotpotQA using three different models for the backward optimizer: Llama-3-8B (Weak), GPT-4o-mini (Mid), and GPT-4o (Strong).

Figure~\ref{fig:model_scale} illustrates that \textsc{TextResNet} consistently outperforms the standard TextGrad baseline (dashed line) across all model scales. Even with the relatively weaker Llama-3-8B Instruct, our framework achieves an F1 score significantly higher than the baseline. This indicates that the performance gains are primarily driven by our architectural innovations rather than solely by the reasoning capability of the underlying LLM.

\begin{figure}[h]
    \centering
    \includegraphics[width=0.8\linewidth]{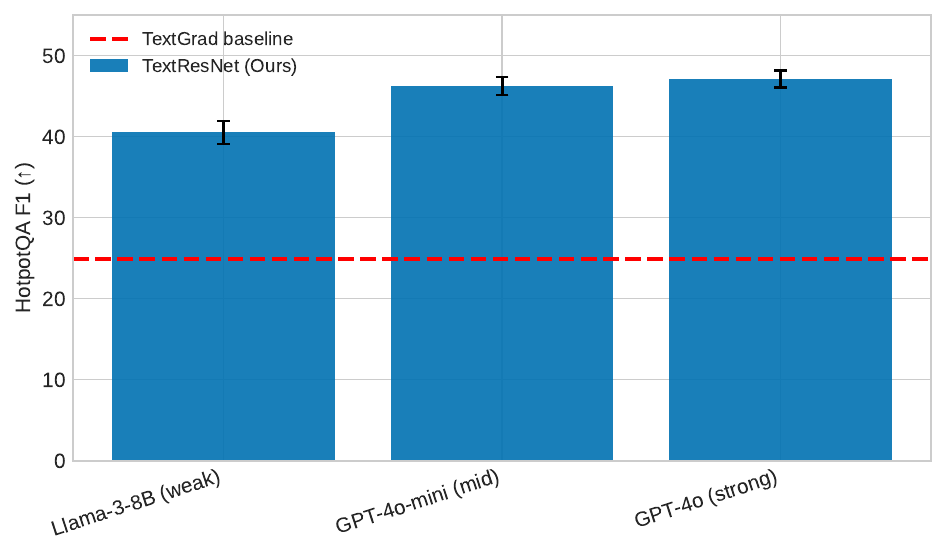}
    \caption{\textbf{Robustness Across Backward Optimizer Models.} We compare \textsc{TextResNet} performance using different LLMs for the backward Semantic Projector. The red dashed line represents the standard TextGrad baseline. Our method generalizes well even to weaker models (Llama-3-8B Instruct), demonstrating that the structural priors effectively reduce the difficulty of the optimization task.}
    \label{fig:model_scale}
\end{figure}

\subsection{Analysis of Scheduling Strategies}
\label{app:scheduling_ablation}

To validate the effectiveness of our \textit{Density-Aware Optimization Scheduling}, we compared it against three alternative strategies:
\begin{itemize}
    \item \textbf{Random:} Randomly selects a component to optimize at each step.
    \item \textbf{Round-Robin:} Iterates through components in a fixed topological order.
    \item \textbf{Greedy:} Selects the component with the highest accumulated error count, but without the Boltzmann exploration mechanism.
\end{itemize}

The results in Figure~\ref{fig:scheduling} show that our Density-Aware approach yields the highest F1 score. While the Greedy strategy performs well, it is prone to getting stuck in local optima by over-optimizing a single noisy component. The Boltzmann sampling in our Density-Aware scheduler balances exploitation (targeting bottlenecks) with exploration, ensuring a more robust global convergence. However, the success of Greedy variant and the Density-Aware scheduler both prove the importance of considering which component to be optimized at each step. The Random and the Round-Robin variants both result in lower score further prove the importance.

\begin{figure}[h]
    \centering
    \includegraphics[width=0.8\linewidth]{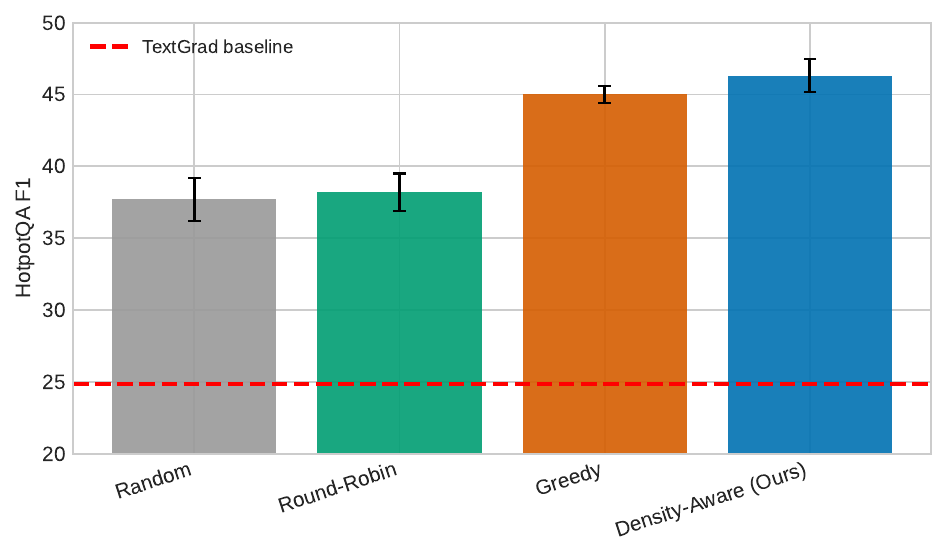}
    \caption{\textbf{Ablation of Scheduling Strategies.} We compare our Density-Aware Scheduling against Random, Round-Robin, and Greedy baselines on HotpotQA. Our approach (Blue) effectively identifies system bottlenecks while maintaining sufficient exploration, outperforming heuristic baselines.}
    \label{fig:scheduling}
\end{figure}

\subsection{Token Efficiency and Optimization Cost}
\label{app:token_efficiency}

Finally, we analyze the computational efficiency of \textsc{TextResNet}. A major limitation of conversational optimization (e.g., TextGrad) is the verbose feedback loop, where every component receives feedback regardless of its contribution to the error.

Figure~\ref{fig:efficiency} presents a dual analysis of token consumption.
In Figure~\ref{fig:efficiency}(a), we plot the volume of feedback tokens generated per step. \textsc{TextResNet} (Blue) exhibits a rapid drop in token usage as training progresses. This is a direct result of our \textit{Causal Routing} mechanism: as components converge or when errors are identified as purely local (via \texttt{STOP\_GRADIENT}), the Semantic Projector cuts off unnecessary feedback branches. In contrast, TextGrad (Orange) maintains a high volume of entangled feedback throughout the process.

Figure~\ref{fig:efficiency}(b) summarizes the total cost and performance. \textsc{TextResNet} consumes approximately $3\times$ fewer tokens than TextGrad (21k vs. 63k) while achieving nearly double the performance (46.23 vs. 24.86 F1). This confirms that our method is not only more effective but also significantly more cost-efficient for optimizing Compound AI Systems.

\begin{figure}[t]
    \centering
    \begin{subfigure}[b]{0.48\textwidth}
        \centering
        \includegraphics[width=\textwidth]{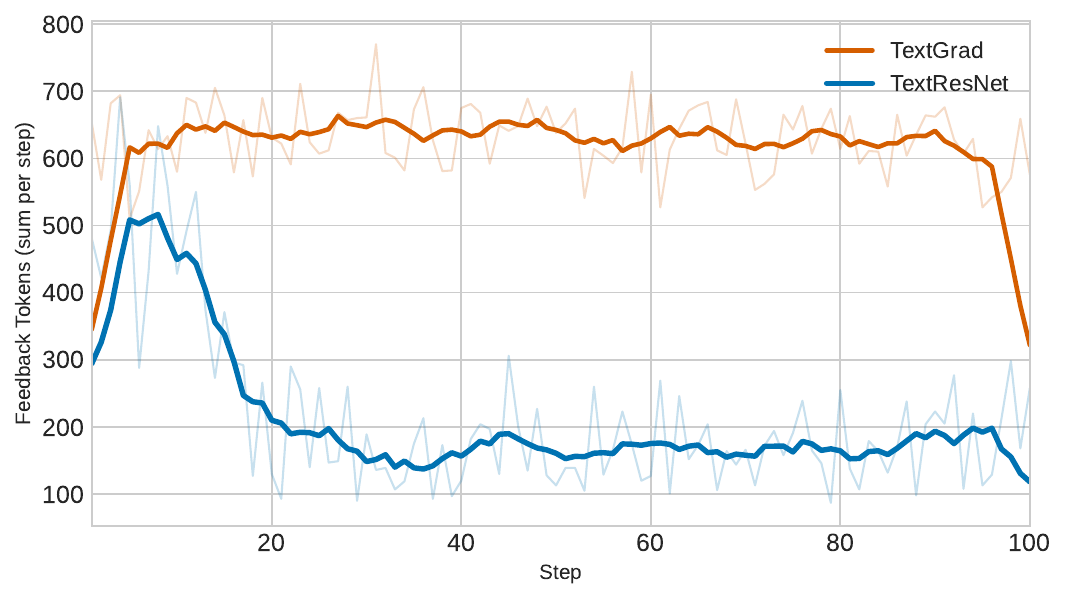}
        \caption{Feedback Tokens per Step}
        \label{fig:tokens_per_step}
    \end{subfigure}
    \hfill
    \begin{subfigure}[b]{0.48\textwidth}
        \centering
        \includegraphics[width=\textwidth]{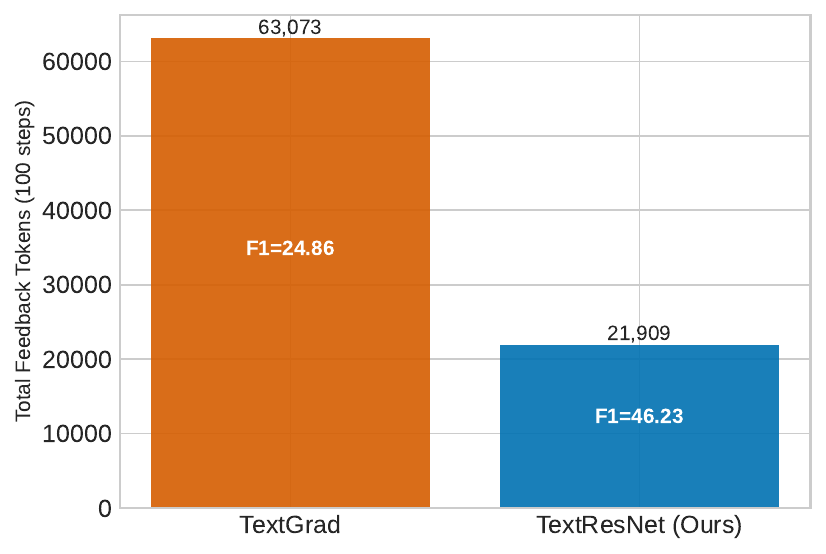}
        \caption{Total Cost vs. Performance}
        \label{fig:total_cost}
    \end{subfigure}
    \caption{\textbf{Token Efficiency Analysis.} (a) The volume of feedback tokens over time. \textsc{TextResNet} naturally reduces feedback volume as the system stabilizes (via \texttt{STOP\_GRADIENT}), whereas TextGrad continues to propagate noisy feedback. (b) Comparison of total token consumption (100 steps) and final F1 score. Our method achieves superior performance with significantly lower computational overhead.}
    \label{fig:efficiency}
\end{figure}

\subsection{Attribution Complexity vs. Chain Depth}
\label{app:depth_vs_dependency}

The benefit of \textsc{TextResNet} is jointly affected by chain depth and inter-component dependency strength. Depth amplifies attribution ambiguity because more components can inject or inherit errors, but depth alone is not sufficient to predict the gain. Across benchmarks, HotpotQA obtains the largest improvement (+21.37 F1) because its retrieval and reasoning stages form a tightly coupled sequential path. PubMedQA and BigCodeBench show moderate gains because their component interactions are shorter or more localized. STaRK obtains the smallest gain (+0.44 MRR) despite using multiple components, because the relation and text scorers operate more independently before final aggregation.

This pattern is consistent with the step-wise ablation in Table~\ref{tab:stepwise_ablation}: the Semantic Projector alone adds +4.54 F1, and combining it with Causal Routing further improves HotpotQA to 37.71 F1 before scheduling is added. These gains isolate the effect of attribution-aware routing from the effect of simply increasing optimization iterations. The chain-depth experiment in Fig.~\ref{fig:chain_depth} should therefore be interpreted as a stress test for long attribution paths, not as evidence that depth is the only driver.

\subsection{Heterogeneous Model Settings}
\label{app:heterogeneous_models}

We test whether \textsc{TextResNet} depends on using the same LLM family in all roles. Table~\ref{tab:heterogeneous_models} reports HotpotQA results under heterogeneous forward/backward configurations. Performance remains within 1.5 F1 of the default setting, except that a stronger backward model gives a small gain. This supports the model-agnostic design: the architecture constrains how signals are routed, while the backward LLM mainly affects the accuracy of the Semantic Projector.

\begin{table}[h]
\centering
\small
\caption{Heterogeneous forward/backward model settings on HotpotQA.}
\label{tab:heterogeneous_models}
\begin{tabular}{l l c c}
\toprule
\textbf{Forward model} & \textbf{Backward model} & \textbf{F1} & \textbf{Delta} \\
\midrule
GPT-4o-mini & GPT-4o-mini & 46.23 & -- \\
GPT-4o-mini & Claude-3-haiku & 45.10 & -1.13 \\
Mixed GPT-4o-mini/Claude-3-haiku & GPT-4o-mini & 44.80 & -1.43 \\
GPT-4o-mini & GPT-4o & 47.20 & +0.97 \\
\bottomrule
\end{tabular}
\end{table}

The mixed-forward setting alternates GPT-4o-mini and Claude-3-haiku across forward components. Its small degradation mainly comes from output-format inconsistencies between model families. The stronger backward model improves F1, consistent with Proposition~\ref{prop:snr_improvement}: better projector accuracy increases the probability of filtering non-actionable feedback.
These experiments test robustness to heterogeneous model roles, rather than feedback aggregation from multiple backward models; multi-model feedback ensembling is left for future work.

\subsection{Failure-Mode Annotation Protocol and Examples}
\label{app:failure_mode_annotation}

To directly evaluate whether \textsc{TextResNet} reduces semantic entanglement, we sampled 50 HotpotQA optimization trajectories and annotated backward feedback messages from TextGrad and \textsc{TextResNet}. Four PhD-level annotators independently labeled each message after a calibration phase on held-out examples. Final labels were assigned by majority vote, followed by a final consistency review. Only one case produced different votes before discussion, indicating that the taxonomy was stable for the sampled outputs.

\begin{table}[h]
\centering
\small
\caption{Failure-mode distribution in backward feedback on HotpotQA.}
\label{tab:failure_mode_distribution}
\begin{tabular}{l c c c}
\toprule
\textbf{Feedback class} & \textbf{TextGrad} & \textbf{\textsc{TextResNet}} & \textbf{Reduction} \\
\midrule
Signal Blockage & $\sim$35\% & $\sim$5\% & $\sim$86\% \\
Downstream Over-correction & $\sim$25\% & $\sim$7\% & $\sim$72\% \\
Upstream Pollution & $\sim$15\% & $\sim$3\% & $\sim$80\% \\
Clean feedback & $\sim$25\% & $\sim$85\% & -- \\
\bottomrule
\end{tabular}
\end{table}

\noindent\textbf{Classification criteria.}
\textit{Signal Blockage} means that feedback reaching an upstream component is too vague to repair the root cause, such as asking a retriever to ``improve retrieval of Chevrolet engine types'' when the missing keyword is ``Big Block.'' \textit{Downstream Over-correction} means a downstream component is asked to fix an upstream error even though its own behavior was faithful to the input. \textit{Upstream Pollution} means a downstream error is incorrectly sent upstream, causing a previously correct upstream component to change. \textit{Clean feedback} means the responsible component receives specific, actionable instructions.

\noindent\textbf{Side-by-side example.}
For the HotpotQA query ``Which engine for medium-duty trucks was also utilized in Chevrolet's intermediate and pony car models?'', the ground truth is ``The Chevrolet Big Block'', while the initial prediction is ``Chevrolet small-block V8 engine.''

\textbf{TextGrad} propagates diluted feedback at each hop:
\begin{enumerate}
    \item AnswerGenerator: ``Review which engine family fits heavier-duty applications.''
    \item HintGenerator: ``Distinguish Chevrolet engine families more specifically.''
    \item InfoExtractor: ``Improve retrieval of Chevrolet engine types.''
    \item The Retriever never learns to search for ``Big Block'', leading to Signal Blockage.
\end{enumerate}

\textbf{\textsc{TextResNet}} decomposes the same error into local and upstream signals:
\begin{enumerate}
    \item AnswerGenerator: $g^{\text{local}}=\emptyset$ because the answer is faithful to the hints, while $g^{\text{upstream}}$ states that hints must cite the exact engine family name from evidence.
    \item HintGenerator: $g^{\text{local}}$ asks it to cite exact engine family names when present in evidence, while $g^{\text{upstream}}$ asks the Retriever to query ``Chevrolet Big Block heavy-duty engine'' explicitly.
    \item The Retriever receives targeted keywords, resolving the original blockage.
\end{enumerate}

\subsection{Prompt Evolution Patterns and Template Boundary}
\label{app:prompt_evolution}

We analyze prompt snapshots at optimization steps 0, 25, 50, 75, and 100 on HotpotQA using normalized Levenshtein distance between consecutive snapshots. Prompt evolution follows a consistent coarse pattern: early iterations make large edits that improve input-quality checks, middle iterations refine task outputs, and late iterations mainly adjust formatting and boundary conditions.

\begin{table}[h]
\centering
\small
\caption{Prompt evolution on HotpotQA.}
\label{tab:prompt_evolution}
\begin{tabular}{l c l}
\toprule
\textbf{Steps} & \textbf{Edit distance} & \textbf{Dominant pattern} \\
\midrule
0--25 & 0.416 & Input-quality assessment \\
25--75 & 0.150 & Output refinement \\
75--100 & 0.049 & Format and boundary adjustment \\
\bottomrule
\end{tabular}
\end{table}

These patterns do not imply that a single generic template can replace optimization. The t-SNE analysis in Fig.~\ref{fig:tsne} shows that optimized feedback clusters are role-specific, such as retrieval-keyword fixes versus reasoning-logic fixes. Different pipeline topologies and evidence sources require different prompt updates. Prompt optimization also has a natural upper bound: it cannot recover information absent from the available evidence or compensate for model capacity limits. This is why stronger backward models can improve the projector in Fig.~\ref{fig:model_scale}, while the architectural routing still explains the large gap over TextGrad.

\subsection{Cost-Normalized Discussion with OPTIMAS}
\label{app:cost_matched_optimas}

OPTIMAS~\citep{optimas} and \textsc{TextResNet} optimize compound systems through different mechanisms. OPTIMAS learns Local Reward Functions using preference data and RL-based training, whereas \textsc{TextResNet} is a training-free prompt optimizer that routes textual feedback. A strictly equal-cost experiment is therefore structurally difficult: the OPTIMAS budget includes GPU training and preference-data collection, while \textsc{TextResNet} primarily uses LLM API calls.

To make the comparison conservative, we exclude hidden OPTIMAS costs such as preference-data API calls and LoRA initialization, and count only standard generation calls plus base GPU rental. Using a low A100 rental estimate of \$2 per GPU-hour, an 8-A100 OPTIMAS run costs \$16 per hour in hardware alone. Under the same approximate \$20 budget where \textsc{TextResNet} reaches 46.23 F1 on HotpotQA, OPTIMAS can run for only about 1.25 hours, which is less than 20\% of its typical 6--8 hour training budget and reaches about 27.00 validation F1 in our truncated run.

We therefore also report gain per cost:
\begin{equation}
\text{Gain per cost} = \frac{\text{Score}_{\text{method}} - \text{Score}_{\text{TextGrad}}}{\text{Total Cost}_{\text{method}}}.
\end{equation}

\begin{table}[h]
\centering
\small
\caption{Cost-normalized improvement over TextGrad.}
\label{tab:cost_normalized_optimas}
\begin{tabular}{l c c}
\toprule
\textbf{Method} & \textbf{HotpotQA F1 gain / \$10} & \textbf{BigCodeBench pass-rate gain / \$10} \\
\midrule
\textsc{TextResNet} & $\sim$11.1 & $\sim$1.1 \\
OPTIMAS & $\sim$2.0 & $\sim$0.3 \\
\bottomrule
\end{tabular}
\end{table}

This analysis suggests that \textsc{TextResNet} provides a roughly 4--6$\times$ cost-efficiency advantage under the counted-cost assumptions. The two methods are also complementary: OPTIMAS provides reward-learning infrastructure, while \textsc{TextResNet} provides routing and structural infrastructure. Combining TextResNet-style causal routing with OPTIMAS's learned Local Reward Functions is a promising direction for future work.

\section{Limitations and Future Directions}
\label{app:limitations}

\noindent\textbf{Window control is currently simple and heuristic.}
Our current context-window control relies on structured state, top-\(k\) caps, and truncation.
While modern LLMs offer long contexts, these simple mechanisms are not sufficient in the worst case:
retrieval evidence may be long-tailed, execution traces can be adversarially verbose, and deep chains can still accumulate many high-priority fields.

\noindent\textbf{Research directions for stronger window control.}
We see several promising improvements:
\begin{itemize}
    \item \textbf{Adaptive budgeting}: learn an instance-conditioned allocator \(b_k(x)\) that assigns more budget to salient fields (e.g., via uncertainty, entropy, or learned gates).
    \item \textbf{Causal compression}: compress evidence using attribution-aware objectives (preserve spans that change the decision), rather than generic summarization.
    \item \textbf{State distillation}: distill a compact latent memory (or structured ``fact table'') that is provably sufficient for downstream components, enabling deeper chains.
    \item \textbf{Retrieval with verification}: replace large evidence blobs with a small set of verified claims + citations, trading length for trustworthiness.
    \item \textbf{Token-level accounting}: enforce explicit token budgets with model-specific tokenizers and overflow-safe fallbacks, rather than character heuristics.
\end{itemize}

\noindent\textbf{Broader application scope.}
\textsc{TextResNet} is most suitable for modular pipelines with inspectable intermediate states, such as retrieval--reasoning--verification systems, tool-use agents, or code-generation workflows with explicit traces.
It is less directly applicable to single-call systems or pipelines where intermediate artifacts are unavailable.
The routing mechanism is also complementary to RL-based compound-system optimization: reward-learning methods such as OPTIMAS can learn local objectives, while \textsc{TextResNet} provides a structured mechanism for assigning credit and routing textual feedback across components.

\end{document}